\documentclass[twoside]{article}

%
\usepackage[accepted]{aistats2022}
%


\setlength{\pdfpageheight}{11in}
\setlength{\pdfpagewidth}{8.5in}
\usepackage[authoryear,round]{natbib}


\usepackage{macros}

\renewcommand{\epsilon}{\varepsilon}

\newcommand{\ucb}{\textsf{UCB}\xspace}
\newcommand{\corral}{\textsf{Corral}\xspace}

\newcommand{\smoothCorral}{\textsf{Smooth Corral}\xspace}

\newcommand{\linucb}{\textsf{LinUCB}\xspace}
\newcommand{\suplinucb}{\textsf{SupLinUCB}\xspace}
\newcommand{\linucbOracle}{\textsf{LinUCB Oracle}\xspace}
\newcommand{\linucbPlus}{\textsf{LinUCB}{\tt++}\xspace}

\newcommand{\linucbPlusCorral}{\textsf{LinUCB}{\tt++} \textsf{with Corral}\xspace}
\newcommand{\dynamicBalancing}{\textsf{Dynamic Balancing}\xspace}

\DeclareMathOperator{\X}{\mathcal{X}}
\DeclareMathOperator{\A}{\mathcal{A}}
\DeclareMathOperator{\F}{\mathcal{F}}

\begin{document}

%

%

\twocolumn[

\aistatstitle{Pareto Optimal Model Selection in Linear Bandits}

\aistatsauthor{ Yinglun Zhu  \And  Robert Nowak }

\aistatsaddress{ University of Wisconsin-Madison \And University of Wisconsin-Madison }  ]

\begin{abstract}
We study model selection in linear bandits, where the learner must adapt to the dimension (denoted by $d_\star$) of the smallest hypothesis class containing the true linear model while balancing exploration and exploitation. Previous papers provide various guarantees for this model selection problem, but have limitations; i.e., the analysis requires favorable conditions that allow for inexpensive statistical testing to locate the right hypothesis class or are based on the idea of ``corralling'' multiple base algorithms, which often performs relatively poorly in practice. These works also mainly focus on upper bounds. In this paper, we establish the first lower bound for the model selection problem. Our lower bound implies that, even with a fixed action set, adaptation to the unknown dimension $d_\star$ comes at a cost: There is no algorithm that can achieve the regret bound $\widetilde{O}(\sqrt{d_\star T})$ simultaneously for all values of $d_\star$. We propose Pareto optimal algorithms that match the lower bound. Empirical evaluations show that our algorithm enjoys superior performance compared to existing ones.
\end{abstract}

\section{INTRODUCTION}
\label{sec:intro}

Model selection considers the problem of choosing an appropriate hypothesis class to conduct learning, and the hope is to optimally balance two types of error: the approximation error and the estimation error. In the supervised learning setting, the learner is provided with a (usually nested) sequence of hypothesis classes $\cH_d \subset \cH_{d+1}$. As an example, $\cH_d$ could be the hypothesis class consisting of polynomials of degree at most $d$. The goal is to design a learning algorithm that adaptively selects the best of these hypothesis classes, denoted by $\cH_\star$, to optimize the trade-off between approximation error and estimation error. Structural Risk Minimization (SRM) \citep{vapnik1974theory, vapnik1995nature, shawe1998structural} provides a principled way to conduct model selection in the standard supervised learning setting. SRM can automatically adapt to the complexity of the hypothesis class $\cH_\star$, with only additional logarithmic factors in sample complexity. Meanwhile, cross-validation \citep{stone1978cross, craven1978smoothing, shao1993linear} serves as a helpful tool to conduct model selection in practice.

Despite the importance and popularity of model selection in the supervised learning setting, only very recently have researchers started to study on model selection problems in interactive/sequential learning setting with bandit feedback. Two additional difficulties are highlighted in such bandit setting \citep{foster2019model}: (1) decisions/actions must be made online/sequentially without seeing the entire dataset; and (2) the learner's actions influence what data is observed, i.e., we only have partial/bandit feedback. In the simpler online learning setting with full information feedback, model selection results analogous to those in the supervised learning setting are obtained by several parameter-free online learning algorithms \citep{mcmahan2013minimax, orabona2014simultaneous, koolen2015second, luo2015achieving, orabona2016coin, foster2017parameter, cutkosky2017online, cutkosky2018black}.

The model selection problem for (contextual) linear bandits is first introduced by \cite{foster2019model}. They consider a sequence of nested linear classifiers in $\R^{d_i}$ as the set of hypothesis classes, with $d_1 < d_2 < \cdots < d_M = d$. The goal is to adapt to the smallest hypothesis class, with apriori \emph{unknown} dimension $d_\star$, that  preserves linearity in rewards. Equivalently, one can think of the model selection problem as learning a true reward parameter $\theta_\star \in \R^d$, but only the first $d_\star$ entries of $\theta_\star$ contain non-zero values. The goal is to design algorithms that could automatically adapt to the intrinsic dimension $d_\star$, rather than suffering the ambient dimension $d$. In favorable scenarios when one can cheaply test linearity, \cite{foster2019model} provide an algorithm with regret guarantee that scales as $\widetilde{O}(K^{1/4} T^{3/4}/\gamma^2 + \sqrt{K d_{\star} T }/\gamma^4)$, where $K$ is the number of arms and $\gamma$ is the smallest eigenvalue of the expected design matrix. The core idea therein is to conduct a sequential test, with sublinear sample complexity, to determine whether to step into a larger hypothesis class on the fly. Although this provides the first guarantee for model selection in the linear bandits, the regret bound is proportional to the number of arms $K$ and the reciprocal of the smallest eigenvalue, i.e., $\gamma^{-1}$. Both $K$ and $\gamma^{-1}$ can be quite large in practice, thus limiting the application of their algorithm. Recall that, when provided with the optimal hypothesis class, the classical algorithm \linucb \citep{chu2011contextual, auer2002using} for linear bandit achieves a regret bound $\widetilde{O}(\sqrt{d_\star T})$, with only polylogarithmic dependence on $K$ and no dependence on $\gamma^{-1}$.

The model selection problem in linear bandits was further studied in many subsequent papers. We roughly divide these methods into the following two sub-categories:
\begin{enumerate}[leftmargin=.2in]
    \item \textbf{Testing in Favorable Scenarios.} The algorithm in \cite{ghosh2020problem} conducts a sequence of statistical tests to gradually estimate the true support (non-zero entries) of $\theta_\star$, and then applies standard linear bandit algorithms on identified support. The regret bound of their algorithm scales as $\widetilde{O}(d^2/\gamma^{4.65} + d^{1/2}_{\star}T^{1/2})$, where $\gamma = \min\{|\theta_{\star,i}|:\theta_{\star, i} \neq 0\}$ is the minimum magnitude of non-zero entries in $\theta_\star$. Their regret bound not only depends on the ambient dimension $d$ but also scales inversely proportional to a small quantity $\gamma$. Their guarantee becomes vacuous when $d$ and/or $\gamma^{-1}$ are large. \cite{chatterji2020osom} consider a different model selection problem where the rewards come from either a linear model or a model with $K$ independent arms. Their algorithm also relies on sequential statistical testing, which requires assumptions stronger than the ones used in \cite{foster2019model} (thus suffering from similar problems).

    \item \textbf{Corralling Multiple Base Algorithms.} Another approach maintains multiple base learners and use a master algorithm to determine sample allocation among base learners. This type of algorithm is initiated by the \corral algorithm \citep{agarwal2017corralling}. Focusing on our model selection setting, the base learners are usually constructed using standard linear bandit algorithms with respect to different hypothesis classes (dimensions). To give an example of the \corral-type of algorithm, the \smoothCorral algorithm developed in \cite{pacchiano2020model} enjoys regret guarantees $\widetilde{O}(d_{\star}\sqrt{T})$ or $\widetilde{O}(d^{1/2}_{\star}T^{2/3})$. Other algorithms of this type, including some concurrent works, can be found in \cite{abbasi2020regret, arora2020corralling, pacchiano2020regret, cutkosky2020upper, cutkosky2021dynamic}.

\end{enumerate}

Note that above algorithms either only work in favorable scenarios when some critical parameters, e.g., $\gamma^{-1}$ and $K$, are not too large or must balance over multiple base algorithms which often hurts the empirical performance. They also mainly focus on developing upper bounds for the model selection problem in linear bandits.
In this paper, we explore the fundamental limits (lower bounds) of the model selection problem and design algorithms with matching guarantees (upper bounds). We establish a lower bound, using only a fixed action set, indicating that adaptation to the unknown intrinsic dimension $d_\star$ comes at a cost: There is no algorithm that can achieve the regret bound $\widetilde{O}(\sqrt{d_\star T})$ simultaneously for all values of $d_\star$. 
We also develop a Pareto optimal algorithm, with ideas fundamentally different from ``testing'' \citep{foster2019model, ghosh2020problem} and ``corralling'' \citep{pacchiano2020model, agarwal2017corralling}, to bear on the model selection problem in linear bandits. 
Our algorithm is built upon the construction of virtual mixture-arms, which is previously studied in continuum-armed bandits \citep{hadiji2019polynomial} and $K$-armed bandits \citep{zhu2020regret}. We adapt their methods to our setting, with new techniques developed to deal with the linear structure, e.g., the construction of virtual dimensions.

\subsection{Contribution and Outline}
We briefly summarize our contributions as follows.
\begin{itemize}[leftmargin=.2in]
    \item We review the model selection problem in linear bandits, and additionally define a new parameter (in \cref{sec:setting}) that reflects the tension between time horizon and the intrinsic dimension. This parameter provides a convenient way to analyze high-dimensional linear bandits.

    \item We establish the first lower bound for the model selection problem in \cref{sec:lower_bound}. Our lower bound indicates that the model selection problem is strictly harder than the problem with given optimal hypothesis class: There is no algorithm that can achieve the non-adaptive $\widetilde{O}(\sqrt{d_\star T})$ regret bound simultaneously for all values of $d_\star$. We additionally characterize the exact Pareto frontier of the model selection problem.
    
    \item In \cref{sec:adaptivity}, we develop a Pareto optimal algorithm that is fundamentally different from existing ones relying on ``testing'' or ``corralling''. Our algorithm is built on the construction of virtual mixture-arms and virtual dimensions. Although our main algorithm is analyzed under a mild assumption, we also provide a workaround.
    
    \item We conduct experiments in \cref{sec:experiment} to evaluate our algorithms. Our main algorithm shows superior performance compared to existing ones. We also show that our main algorithm is fairly robust to the existence of the assumption used in our analysis.
\end{itemize}

\subsection{Other Related Work}

\paragraph{Bandit with Large/Continuous Action Space.} Adaptivity issues naturally arises in bandit problems with large or infinite action space. In continuum-armed bandit problems \citep{agrawal1995continuum}, actions are embedded into a bounded subset $\X \subseteq \R^d$ with a smooth function $f$ governing the mean payoff for each arm. Achievable theoretical guarantees are usually influenced by some smoothness parameters, and an important question is to design algorithms that adapt to these \emph{unknown} parameters, as discussed in \cite{bubeck2011lipschitz}. \cite{locatelli2018adaptivity} show that, however, no strategy can be optimal simultaneously over all smoothness classes. \cite{hadiji2019polynomial} establishes the Pareto frontier for continuum-armed bandits with H\"older reward functions. 
Adaptivity is also studied in the discrete case with a large action space \citep{wang2008algorithms, lattimore2015pareto, chaudhuri2018quantile, russo2018satisficing, zhu2020regret}. \citet{lattimore2015pareto} studies the Pareto frontier in standard $K$-armed bandits. \citet{zhu2020regret} develop Pareto optimal algorithms for the case with multiple best arms.

\paragraph{High-Dimensional Linear Bandits.} As more and more complex data are being used and analyzed, modern applications of linear bandit algorithms usually involve dealing with ultra-high-dimensional data, sometimes with dimension even larger than time horizon \citep{deshpande2012linear}. 
To make progress in this high-dimensional regime, one natural idea is to study (or assume) sparsity in the reward vector and try to adapt to the unknown true support (non-zero entries). The sparse bandit problem is strictly harder than the model selection setting considered here due to the absence of the hierarchical structures. Consequently, a lower bound on the regret of the form $\Omega(\sqrt{d T})$, which scales with the ambient dimension $d$, is indeed unavoidable in the sparse linear bandit problem \citep{abbasi2012online, lattimore2020bandit}. Other papers deal with the sparsity setting with additional feature feedback \citep{oswal2020linear} or further distributional/structual assumptions \citep{carpentier2012bandit, hao2020high} to circumvent the lower bound. These high-dimensional linear bandit problems motivate our investigation of the relationship between time horizon and data dimension.

\section{PROBLEM SETTING}
\label{sec:setting}

We consider a linear bandit problem with a finite action set $\cA \subseteq \R^d$ where $\abs*{\cA} = K$ \citep{auer2002using, chu2011contextual}. (The feature representation of) Each arm/action $a \in \cA$ is viewed as a $d$ dimensional vector, and its expected reward $f(a)$ is linear with respect to a reward parameter $\theta_\star \in \R^d$, i.e., $f(a) = \ang*{a, \theta_\star}$. As standard in the literature \citep{ lattimore2020bandit}, we assume $\max_{a \in \cA} \| a \| \leq 1$ and $\| \theta_\star \| \leq 1$. The bandit instance is said to have intrinsic dimension $d_\star$ if $\theta_\star$ only has non-zero entries on its first $d_\star \leq d$ coordinates. The model selection problem aims at designing algorithm that can automatically adapt to the \emph{unknown} intrinsic dimension $d_\star$ in the interactive learning setting with bandit feedback.

At each time step $t \in [T]$,\footnote{Throughout the paper, we denote $[n]=\{1,2,\dots, n\}$ for any positive integer $n$.} the algorithm selects an action $A_t \in {\cal A}$ based on previous observations and receives a reward $X_t = \langle A_t, \theta_\star \rangle + \eta_t$, where $\eta_t$ is an independent $1$-sub-Gaussian noise. We define the pseudo regret (which is random, due to randomness in $A_t$) over time horizon $T$ as $\widehat{R}_T = \sum_{t=1}^T \left\langle \theta_\star,  a_\star - {A_t} \right\rangle$, where $a_\star$ corresponds to the best action in action set, i.e., $a_\star = \argmax_{a \in {\cal A}} \langle a, \theta_\star \rangle$. We measure the performance of any algorithm by its expected regret $R_{T} = \E \sq*{ \widehat{R}_T } = \E \sq*{ \sum_{t=1}^T \left\langle \theta_\star,  a_\star - {A_t} \right\rangle }$.

We primarily focus on the high-dimensional linear bandit setting with ambient dimension $d$ close to or even larger than (the allowed) time horizon $T$. We use ${\cal R}(T, d_\star)$ to denote the set of regret minimization problems with time horizon $T$ and any bandit instance with intrinsic dimension $d_\star$. We emphasize that $T$ is part of the problem instance, which was largely neglected in previous work focusing on the low dimensional regime where $T \gg d_\star$. To model the tension between the allowed time horizon and the intrinsic dimension, we define the hardness level as 
\begin{align*}
    \psi \left({\cal R}(T, d_\star) \right) = \min \{ \alpha \geq 0: d_\star \leq T^{\alpha}  \} = \log d_\star/ \log T.
\end{align*}
$\psi({\cal R}(T, d_\star))$ is used here since it precisely captures the regret over the set of regret minimization problem ${\cal R}(T, d_\star)$, as discussed later in our review of the \linucb algorithm and the lower bound. Since smaller $\psi({\cal R}(T, d_\star))$ indicates easier problem, we define the family of regret minimization problems with \emph{hardness level} at most $\alpha$ as 
\begin{align}
	{\cal H}_T(\alpha) = \{ \cup {\cal R}(T, d_\star) : \psi({\cal R}(T, d_\star))\leq \alpha \} \nonumber,
\end{align}
where $\alpha \in [0, 1]$. Although $T$ is necessary to define a regret minimization problem, the hardness of the problem is encoded into a single parameter $\alpha$: Problems with different time horizons but the same $\alpha$ are equally difficult in terms of the regret achieved by \linucb (the exponent of $T$). 
We explore the connection $d_\star \leq T^{\alpha}$ in the rest of this paper and focus on (polynomial) dependence on $T$ (i.e., the dependence on $d_\star$ is translated into the dependence on $T^\alpha$).
We are interested in designing algorithms with worst case guarantees over ${\cal H}_T(\alpha)$, but \emph{without} the knowledge of $\alpha$.

\paragraph{\linucb and Upper Bounds.} In the standard setting where $d_\star$ is known, \linucb \cite{chu2011contextual, auer2002using} achieves $\widetilde{O}(\sqrt{d_\star T})$ regret.\footnote{Technically, the regret bound is only achieved by a more complicated algorithm \suplinucb. However, it's common to use \linucb as the practical algorithm. See \citet{chu2011contextual} for detailed discussion.} For any problem in ${\cal H}_T(\alpha)$ with \emph{known} $\alpha$, one could run \linucb on the first $\floor*{T^{\alpha}}$ coordinates and achieve $\widetilde{O}(T^{(1+\alpha)/2})$ regret. The goal of model selection is to achieve the $\widetilde{O}(T^{(1+\alpha)/2})$ regret but without the knowledge of $\alpha$.

 \paragraph{Lower Bounds.} In the case when $d_\star \leq \sqrt{T}$, \cite{chu2011contextual} prove a $\Omega(\sqrt{d_\star T})$ lower bound for linear bandits. When $d_\star \geq \sqrt{T}$ is the case, a lower bound $\Omega(K^{1/4} T^{3/4})$ is developed in \cite{abe2003reinforcement}.

\section{LOWER BOUND AND PARETO OPTIMALITY}
\label{sec:lower_bound}

We study lower bounds for model selection in this section. 
We show that simultaneously adapting to all hardness levels is impossible. Such fundamental limitation leads to the established of Pareto frontier.

Our lower bound is constructed by relating the regrets between two (sets of) closely related problems: We show that any algorithm achieves good performance on one of them necessarily performs bad on the other one. 
Similar ideas are previously explored in continuum-armed bandit and $K$-armed bandits \citep{locatelli2018adaptivity, hadiji2019polynomial, zhu2020regret}.
We study the linear case with model selection and establish the following lower bound.\footnote{Our lower bound is quantitatively similar to the one studied in $K$-armed bandits with multiple best arms \citep{zhu2020regret}.}
We use $\omega \in {\cal H}_T(\alpha)$ to represent any bandit regret minimization problem with time horizon $T$ and hardness level at most $\alpha$ (i.e., $d_\star \leq T^\alpha$).

\begin{restatable}{theorem}{lowerBound}
	\label{thm:lower_bound}
	Consider any $0 \leq \alpha^\prime < \alpha \leq 1$ and $B > 0$ satisfying $T^{\alpha} \leq B$ and $\lfloor T^\alpha/2 \rfloor \geq \max\{ T^\alpha/4, T^{\alpha^\prime}, 2\}$. If an algorithm is such that $\sup_{\omega \in {\cal H}_T(\alpha^\prime)} R_T \leq B$, then the regret of the same algorithm must satisfy
	\begin{align}
	\sup_{\omega \in {\cal H}_T(\alpha)} R_T \geq 2^{-10} \, T^{1+\alpha} B^{-1} .\label{eq:lower_bound}
	\end{align}
\end{restatable}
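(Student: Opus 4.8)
The plan is to realize the Pareto trade-off with a single fixed action set carrying one ``easy'' instance $\nu_0\in{\cal H}_T(\alpha')$ and an exponentially large family of ``hard'' instances $\{\nu_v\}\subseteq{\cal H}_T(\alpha)$, and to argue that any algorithm with $\sup_{\omega\in{\cal H}_T(\alpha')}R_T\le B$ cannot afford the exploration needed to control its regret over the hard family. Put $m=\lfloor T^\alpha/2\rfloor$; the hypotheses give $2\le m$, $T^{\alpha'}\le m$, $T^\alpha/4\le m<T^\alpha$, and $m+1\le T^\alpha$. Using coordinate $1$ and the block $H=\{2,\dots,m+1\}$, take
\begin{align*}
	{\cal A}=\{e_1\}\cup\Bigl\{\tfrac{1}{\sqrt m}\sum_{i\in H}\epsilon_i e_i:\epsilon\in\{-1,+1\}^{H}\Bigr\},
\end{align*}
all of unit norm, and let the noise be standard Gaussian (which is $1$-sub-Gaussian). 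Set $\theta_0=\Delta_0 e_1$ for the easy instance $\nu_0$ (intrinsic dimension $1$, so $\nu_0\in{\cal R}(T,1)\subseteq{\cal H}_T(\alpha')$), and $\theta_v=\Delta_0 e_1+\Delta\sum_{i\in H}v_i e_i$ for the hard instance $\nu_v$, $v\in\{-1,+1\}^{H}$ (intrinsic dimension $m+1\le T^\alpha$, so $\nu_v\in{\cal R}(T,m+1)\subseteq{\cal H}_T(\alpha)$; since $m+1>T^{\alpha'}$ the hard instances genuinely leave ${\cal H}_T(\alpha')$, which is why the two classes are separated). The gaps $\Delta_0,\Delta\in(0,1)$ are chosen last. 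Under $\nu_0$, $e_1$ is optimal and \emph{every} hypercube arm --- supported on $H$, disjoint from $\mathrm{supp}(\theta_0)$ --- has gap exactly $\Delta_0$; under $\nu_v$, provided $\Delta\sqrt m\ge 2\Delta_0$, the hypercube arm $\epsilon=v$ is the unique optimum (value $\Delta\sqrt m$) and $e_1$ has gap $\Delta\sqrt m-\Delta_0\ge\Delta\sqrt m/2$.

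Write $P_\nu$ for the law of the interaction $(A_1,X_1,\dots,A_T,X_T)$ under instance $\nu$, and $N_a(T)$ for the number of pulls of arm $a$. First, $R_T(\nu_0)\le B$ together with the fact that each non-$e_1$ pull costs exactly $\Delta_0$ on $\nu_0$ gives the exploration budget $\E_{\nu_0}[T-N_{e_1}(T)]\le B/\Delta_0$. Second, since $\nu_0$ and $\nu_v$ agree on coordinate $1$ the reward law of $e_1$ is the same under both, so the divergence decomposition for bandits yields $\mathrm{KL}(P_{\nu_0}\,\|\,P_{\nu_v})=\sum_{\epsilon}\E_{\nu_0}[N_\epsilon(T)]\cdot\tfrac12\bigl(\tfrac{\Delta}{\sqrt m}\langle\epsilon,v\rangle\bigr)^2$; averaging over $v\sim\mathrm{Unif}(\{-1,+1\}^{H})$, the identity $\E_v[\langle\epsilon,v\rangle^2]=m$ collapses the per-pull term to $\Delta^2/2$, so $\E_v[\mathrm{KL}(P_{\nu_0}\,\|\,P_{\nu_v})]\le\tfrac{\Delta^2}{2}\,\E_{\nu_0}[T-N_{e_1}(T)]\le\tfrac{\Delta^2 B}{2\Delta_0}$. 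Third, because $N_{e_1}(T)\in[0,T]$ we have $\E_{\nu_v}[N_{e_1}(T)]\ge\E_{\nu_0}[N_{e_1}(T)]-T\cdot\mathrm{TV}(P_{\nu_0},P_{\nu_v})$; averaging over $v$, applying Pinsker and Jensen, and imposing $\Delta\le\tfrac12\sqrt{\Delta_0/B}$ (so the averaged divergence is $\le\tfrac18$ and hence the averaged TV distance is $\le\tfrac14$) together with $B/\Delta_0\le T/4$ produces a $v^\star$ with $\E_{\nu_{v^\star}}[N_{e_1}(T)]\ge T-B/\Delta_0-T/4\ge T/2$. Hence $R_T(\nu_{v^\star})\ge\E_{\nu_{v^\star}}[N_{e_1}(T)]\cdot(\Delta\sqrt m-\Delta_0)\ge\tfrac{T}{2}\cdot\tfrac{\Delta\sqrt m}{2}=\tfrac{T\Delta\sqrt m}{4}$.

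It remains to pin down the gaps. Take $\Delta=\tfrac12\sqrt{\Delta_0/B}$ and $\Delta_0=m/(16B)$: this is the largest $\Delta_0$ compatible with $\Delta\sqrt m\ge 2\Delta_0$, it satisfies the norm constraint $\Delta_0^2+m\Delta^2\le 1$ because $m\le T^\alpha\le B$, and it gives $\Delta\sqrt m=m/(8B)$, whence $R_T(\nu_{v^\star})\ge Tm/(32B)\ge T^{1+\alpha}/(128B)\ge 2^{-10}T^{1+\alpha}B^{-1}$ using $m\ge T^\alpha/4$. The one constraint still to verify, $B/\Delta_0\le T/4$, i.e.\ $64B^2\le mT$, holds whenever $B\le T^{(1+\alpha)/2}/16$ (recall $mT\ge T^{1+\alpha}/4$); and in the complementary regime $B>T^{(1+\alpha)/2}/16$ the asserted quantity $2^{-10}T^{1+\alpha}B^{-1}$ is itself only of order $T^{(1+\alpha)/2}$ or smaller, and is then delivered directly by the non-adaptive lower bound for linear bandits recalled in \cref{sec:setting} (instantiated with $d_\star=m+1$ on this same action set, since ${\cal A}$ restricted to $H$ is exactly the standard hard hypercube design).

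I expect the crux to be the information-theoretic step carried out at the right granularity: one has to check that $e_1$ contributes nothing to the divergence, so that only the budgeted $B/\Delta_0$ exploratory pulls can buy information about $v$; one has to exploit the symmetry of the hypercube so that the \emph{average} per-pull divergence is $O(\Delta^2)$ rather than the worst-case $O(m\Delta^2)$ --- this factor-$m$ gain is precisely what upgrades a $T\sqrt{\Delta_0/B}$ bound to the $T^{1+\alpha}/B$ rate; and one has to push a single scalar statistic, the count $N_{e_1}(T)$, through the change of measure so that ``small divergence'' directly forces ``too little exploration on $\nu_v$'' and hence large regret there. Everything else --- confirming that the norm bound, the ordering $\Delta\sqrt m\ge 2\Delta_0$, the distinguishability threshold, and the hypotheses $T^\alpha\le B$ and $\lfloor T^\alpha/2\rfloor\ge\max\{T^\alpha/4,T^{\alpha'},2\}$ are mutually consistent, and folding in the large-$B$ case --- is routine bookkeeping, with the generous constant $2^{-10}$ there to absorb the slack.
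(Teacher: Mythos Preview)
Your proof is correct but takes a genuinely different route. The paper uses a much smaller action set: besides the easy arm $a_0 = \theta_0/\|\theta_0\|$, it has only $K = \lfloor T^\alpha/2 \rfloor$ canonical basis vectors $e_{\rho(1)}, \dots, e_{\rho(K)}$, and the hard instances are $\theta_i = \theta_0 + \Delta\, e_{\rho(i)}$ for $i \in [K]$. The divergence between the null and the $i$-th hard instance is then exactly $2\Delta^2\,\E_0[N_i(T)]$, so averaging over $i$ and applying Jensen to $\sqrt{\cdot}$ yields the bound $\sqrt{2\Delta B/K}$ directly; the single choice $\Delta = 2^{-5}K/B$ makes this at most $1/4$ and finishes the proof \emph{uniformly} in $B \ge T^\alpha$, with no case split. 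Your hypercube construction also works---the identity $\E_v[\langle\epsilon,v\rangle^2] = m$ is a clean way to collapse the per-pull divergence---but it forces you into the side case $B > T^{(1+\alpha)/2}/16$, handled by a separate minimax appeal that is correct in spirit though the citation of \cref{sec:setting} is loose (those bounds are stated for different action-set regimes, so you would strictly need to rederive the hypercube minimax bound on your fixed ${\cal A}$). More consequential for the paper's downstream use, your action set has $2^{\Theta(T^\alpha)}$ arms whereas the paper's has only $O(T)$: the paper explicitly exploits its small $K$ right after the theorem to rule out $\widetilde{O}(T^\gamma d_\star^{1-\gamma})$-type guarantees with polylogarithmic dependence on the number of arms, and your construction would not support that conclusion since $\log|{\cal A}|$ is already polynomial in $T$.
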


Our lower bound delivers important messages to the model selection problem in linear bandits. Most of the previous efforts and open problems \citep{foster2019model, pacchiano2020model} are made to match the usual non-adaptive regret with known $d_\star$ (or $\alpha$). Our lower bound, however, provides a negative answer towards the open problem of achieving regret guarantees $\widetilde{O}(T^{(1+\alpha)/2})$ simultaneously for all hardness levels $\alpha$. 
We interpret this result next.

\paragraph{Interpretation of \cref{thm:lower_bound}.} Fix any linear bandit algorithm. We consider two problem instances with different hardness levels $0 \leq \alpha^\prime < \alpha \leq 1$ (and satisfy the constrains in \cref{thm:lower_bound}). On one hand, if the algorithm is such that $\sup_{\omega \in {\cal H}_T(\alpha^\prime)} R_T = \widetilde{\omega}(T^{(1+\alpha^\prime)/2})$, we know that this algorithm is already sub-optimal over problems with hardness level at most $\alpha^\prime$. On the other hand, suppose that the algorithm achieves the desired regret $\widetilde{O}(T^{(1+\alpha^\prime)/2})$ over ${\cal H}_T(\alpha^\prime)$. \cref{eq:lower_bound} then tells us that $\sup_{\omega \in {\cal H}_T(\alpha)} R_T = \widetilde{\Omega}(T^{(1+2\alpha - \alpha^\prime)/2})$, which is (asymptotically) larger than the desired regret $\widetilde{O}(T^{(1+\alpha)/2})$ over problems with hardness level at most $\alpha$.

If we aim at providing regret bounds with only polylogarithmic dependence on $K$ in linear bandits (which is usually the case for linear bandits with finite action set \citep{auer2002using, chu2011contextual}). our lower bound also provides a negative answer to the open problem of achieving a weaker guarantee $\widetilde{O}(T^\gamma d_\star^{1-\gamma}) = \widetilde{O}(T^{\gamma + \alpha(1-\gamma)})$, with $\gamma \in [1/2,1)$ \citep{foster2019model}, simultaneously for all $d_\star$ (or $\alpha$).

In the model selection setting, the performance of any algorithm should be a function of the hardness level $\alpha$: The algorithm needs to adapt the \emph{unknown} $\alpha$. To further explore the fundamental limit for model selection in linear bandits, following \cite{hadiji2019polynomial, zhu2020regret}, we define rate function to capture the performance of any algorithm (in terms of its regret dependence on polynomial terms of $T$).

\begin{definition}
	\label{def:rate}
	Let $\theta:[0,1] \rightarrow [0, 1]$ denote a non-decreasing function. An algorithm achieves the rate function $\theta$ if 
	\begin{align}
		\forall \epsilon > 0, \forall \alpha \in [0, 1], \quad \limsup_{T \rightarrow \infty} \frac{\sup_{\omega \in {\cal H}_T(\alpha)} R_T}{T^{\theta(\alpha)+\epsilon}} < + \infty. \nonumber 
	\end{align}
\end{definition}

Since there may not always exist a pointwise ordering over rate functions, we consider the notion of Pareto optimality over rate functions.

\begin{figure}[h]
    \vspace{-10pt}
    \centering
    \includegraphics[width=1\linewidth]{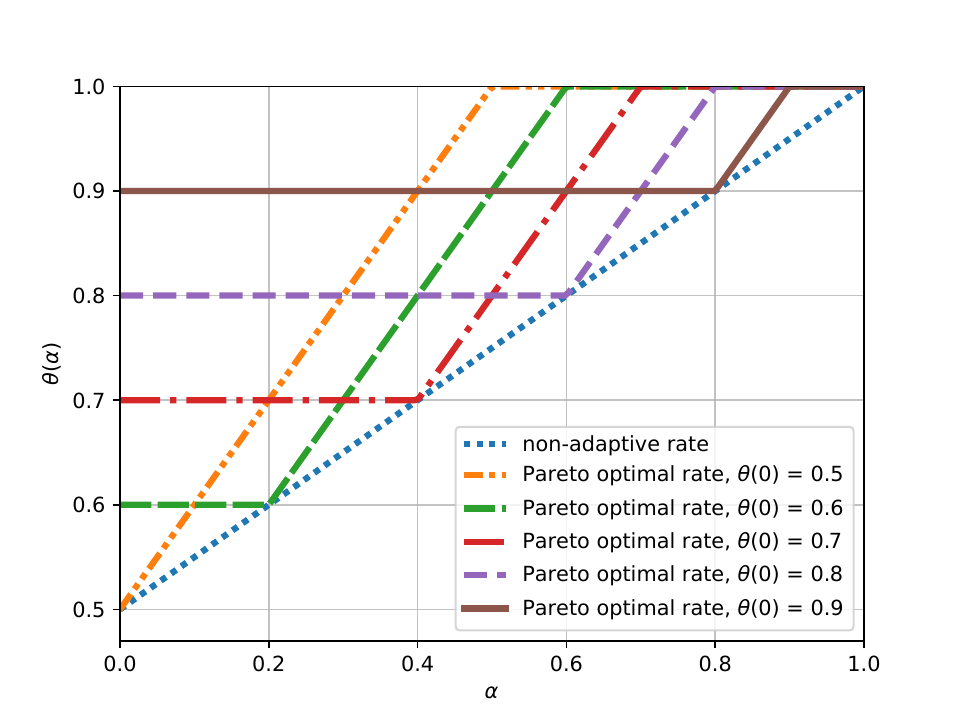}
    \caption{Pareto Optimal Rates for Model Selection in Linear Bandits.}
    \label{fig:pareto_rate}
\end{figure}

\begin{definition}
	\label{def:pareto}
	A rate function $\theta$ is Pareto optimal if it is achieved by an algorithm, and there is no other algorithm achieving a strictly smaller rate function $\theta^\prime$ in the pointwise order. An algorithm is Pareto optimal if it achieves a Pareto optimal rate function.
\end{definition}

We establish the following lower bound for any rate function that can be achieved by an algorithm designed for model selection in linear bandits.

\begin{restatable}{theorem}{thmRateLowerBound}
    \label{thm:rate_lower_bound}
Suppose a rate function $\theta$ is achieved by an algorithm, then we must have 
\begin{align}
\label{eq:rate_lower_bound}
    \theta(\alpha) \geq \min \{ \max \{ \theta(0), 1 + \alpha - \theta(0) \}, 1 \},
\end{align}
with $\theta(0) \in [1/2, 1]$.
\end{restatable}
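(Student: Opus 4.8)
The plan is to derive \cref{eq:rate_lower_bound} from \cref{thm:lower_bound} instantiated with $\alpha^\prime=0$. Two facts come for free. Since a rate function is non-decreasing and takes values in $[0,1]$, we have $\theta(0)\le\theta(\alpha)\le 1$; and since ${\cal H}_T(0)$ already contains one-dimensional linear bandit instances, whose minimax regret is $\Omega(\sqrt T)$ (a special case of the $\Omega(\sqrt{d_\star T})$ lower bound of \citet{chu2011contextual}), any algorithm achieving a rate function satisfies $\theta(0)\ge 1/2$, so $\theta(0)\in[1/2,1]$. Using $\theta(0)\le\theta(\alpha)\le 1$, one checks that \cref{eq:rate_lower_bound} holds automatically unless all three of $\theta(0)<1+\alpha-\theta(0)$, $\theta(\alpha)<1$, and $\theta(\alpha)<1+\alpha-\theta(0)$ hold simultaneously; the remaining task is to rule out this configuration. (The case $\alpha=0$ is trivial, so assume $\alpha>0$.)

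Fix a small $\epsilon>0$. Since the algorithm achieves $\theta$, there is a constant $C_\epsilon$ with $\sup_{\omega\in{\cal H}_T(0)}R_T\le C_\epsilon T^{\theta(0)+\epsilon}$ for all large $T$. I would apply \cref{thm:lower_bound} with $\alpha^\prime=0$ and $B=B_T:=\max\{C_\epsilon T^{\theta(0)+\epsilon},\,T^{\alpha}\}$: this choice ensures both the hypothesis $\sup_{\omega\in{\cal H}_T(0)}R_T\le B_T$ and the side condition $T^{\alpha}\le B_T$, while the remaining side condition $\lfloor T^{\alpha}/2\rfloor\ge\max\{T^{\alpha}/4,\,1,\,2\}$ holds for all large $T$ because $\alpha>0$. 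The theorem then gives $\sup_{\omega\in{\cal H}_T(\alpha)}R_T\ge 2^{-10}\,T^{1+\alpha}\,B_T^{-1}$ for all such $T$.

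It remains to evaluate $B_T$ in two regimes. If $\alpha>\theta(0)$, pick $\epsilon<\alpha-\theta(0)$; then $C_\epsilon T^{\theta(0)+\epsilon}\le T^{\alpha}$ for large $T$, so $B_T=T^{\alpha}$ and $\sup_{\omega\in{\cal H}_T(\alpha)}R_T\ge 2^{-10}T$, which via \cref{def:rate} forces $\theta(\alpha)\ge 1$, contradicting $\theta(\alpha)<1$. If instead $\alpha\le\theta(0)$, then $B_T=C_\epsilon T^{\theta(0)+\epsilon}$ for large $T$, so $\sup_{\omega\in{\cal H}_T(\alpha)}R_T\ge 2^{-10}C_\epsilon^{-1}T^{1+\alpha-\theta(0)-\epsilon}$; comparing again with \cref{def:rate} (at parameter $\alpha$) yields $1+\alpha-\theta(0)-\epsilon\le\theta(\alpha)+\epsilon^\prime$ for every $\epsilon^\prime>0$, and sending $\epsilon,\epsilon^\prime\to 0$ gives $\theta(\alpha)\ge 1+\alpha-\theta(0)$, contradicting the third assumption. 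Either way the forbidden configuration is impossible, which proves \cref{eq:rate_lower_bound}.

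I expect the only real obstacle to be careful bookkeeping: checking that the side conditions of \cref{thm:lower_bound} (namely $T^{\alpha}\le B_T$ and the floor inequality) hold \emph{eventually} in $T$ for the chosen $B_T$ --- which is precisely why $B_T$ is taken to be a maximum and why the regimes $\alpha>\theta(0)$ and $\alpha\le\theta(0)$ are separated --- and managing the nested limits $\epsilon,\epsilon^\prime\to 0$ so that the sub-polynomial slack built into \cref{def:rate} does not erode the exponent $1+\alpha-\theta(0)$. The rest follows immediately from monotonicity and the range of $\theta$ together with the $\Omega(\sqrt T)$ floor on ${\cal H}_T(0)$.
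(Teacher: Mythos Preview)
Your proof is correct and follows essentially the same route as the paper: apply \cref{thm:lower_bound} with $\alpha'=0$, combine the resulting inequality $\theta(\alpha)\ge 1+\alpha-\theta(0)$ with monotonicity $\theta(\alpha)\ge\theta(0)$ and the range constraint $\theta\le 1$, and invoke the $\Omega(\sqrt T)$ minimax lower bound on ${\cal H}_T(0)$ for $\theta(0)\ge 1/2$. The only organizational difference is in the regime $\alpha>\theta(0)$: the paper first establishes $\theta(\theta(0))=1$ (by plugging $\alpha=\theta(0)$ into the lemma) and then extends to larger $\alpha$ via monotonicity, whereas you re-apply \cref{thm:lower_bound} directly with $B_T=T^{\alpha}$ to force $\theta(\alpha)\ge 1$; both are equally valid and equally short.
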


\cref{fig:pareto_rate} illustrates the Pareto frontiers for the model selection problem in linear bandits:
The blue dashed line represents the non-adaptive rate function achieved by \linucb with \emph{known} $\alpha$;
Other curves represent Pareto optimal rate functions (achieved by Pareto optimal algorithms introduced in \cref{sec:adaptivity}) for the model selection problem in linear bandits.
\cref{fig:pareto_rate} implies that no algorithm can achieve the non-adaptive rate simultaneously for all $\alpha$: any Pareto optimal curve has to be higher than the non-adaptive curve at least at some points.

\paragraph{Pareto Optimality of \corral-Type of Algorithms.} We remark that, accompanied with our lower bound, the \smoothCorral algorithm presented in \cite{pacchiano2020model} is also Pareto optimal. While only a $\widetilde O(d_\star \sqrt{T})$ regret bound is presented for the \smoothCorral algorithm, upon inspection of their analysis, we find that \smoothCorral can actually match the lower bound in \cref{eq:rate_lower_bound} by setting the learning rate as $\eta = T^{-\theta(0)}$, for any $\theta(0) \in [1/2,1)$. 
See \cref{app:corral} for a detailed discussion.

Although the \corral-type of algorithm (e.g., \smoothCorral) is Pareto optimal, they may not be effective in problems with specific structures \citep{papini2021leveraging}. We introduce a new Pareto optimal algorithm in the next section, which is shown to be more practical than \smoothCorral regarding model selection problems in linear bandits (see \cref{sec:experiment}).

\section{PARETO OPTIMALITY WITH NEW IDEAS}
\label{sec:adaptivity}

We develop a Pareto optimal algorithm \linucbPlus (\cref{alg:linucbPlus}) that operates fundamentally different from algorithms rely on ``testing'' \citep{foster2019model, ghosh2020problem} or ``corralling'' \citep{pacchiano2020model, agarwal2017corralling}. 
Our algorithm is built upon the construction of virtual mixture-arms \citep{hadiji2019polynomial, zhu2020regret} and virtual dimensions.

We first introduce some additional notations. For any vector $a \in \R^d$ and $0 \leq d_i \leq d$, we use $a^{(d_i)} \in \R^{d_i}$ to represent the truncated version of $a$ that only keeps the first $d_i$ dimensions. We also use $[a_1; a_2]$ to represent the concatenated vector of $a_1$ and $a_2$. We denote $\A^{(d_i)} \subseteq \R^{d_i}$ as the ``truncated" action (multi-) set, i.e., ${\cal A}^{(d_i)} = \curly*{ a^{(d_i)}   \in \R^{d_i}: a \in {\cal A} }$.
One can always \emph{manually} construct the truncated action set ${\cal A}^{(d_i)}$ and \emph{pretend} to work with arms with truncated feature representations (though their expected rewards may not be aligned with the truncated feature representations).

\begin{algorithm}[]
	\caption{\linucbPlus}
	\label{alg:linucbPlus} 
	\renewcommand{\algorithmicrequire}{\textbf{Input:}}
	\renewcommand{\algorithmicensure}{\textbf{Output:}}
	\begin{algorithmic}[1]
		\REQUIRE Time horizon $T$ and a user-specified parameter $\beta \in [1/2, 1)$.
		\STATE \textbf{Set:} $p = \lceil \log_2 T^\beta \rceil$, $d_i = \min\{2^{p+2-i}, d\}$ and $\Delta T_i = \min \{ 2^{p + i}, T\}$.
		\FOR {$i = 1, \dots, p$}
		\STATE Run \linucb on a set of arms $S_i$ for $\Delta T_i$ rounds, where $S_i$ contains all arms in ${\cal A}^{(d_i)}$ \emph{and} a set of virtual mixture-arms constructed from previous iterations, i.e., $\{\widetilde{\nu}_j\}_{j < i}$. \linucb is operated with respect to an modified linear bandit problem with added virtual dimensions.
		\STATE Construct a virtual mixture-arm $\widetilde{\nu}_i$ based on empirical sampling frequencies in iteration $i$.
		\ENDFOR 
	\end{algorithmic}
\end{algorithm}

We present \linucbPlus in \cref{alg:linucbPlus}. \linucbPlus operates in iterations with geometrically increasing length, and it invokes \linucb (\suplinucb) \citep{chu2011contextual, auer2002using} with (roughly) geometrically decreasing dimensions. 
The core steps of \linucbPlus are summarized at lines 3 and 4 in \cref{alg:linucbPlus}, which consists of construction of virtual mixture-arms and virtual dimensions (the modified linear bandit problem).
We next explain in detail these two core ideas.

\paragraph{The Virtual Mixture-Arm.} After each iteration $j$, let $\widehat{p}_j$ denote the vector of empirical sampling frequencies of the arms in that iteration, i.e., the $k$-th element of $\widehat{p}_j$ is the number of times arm $k$, including all previously constructed virtual mixture-arms, was sampled in iteration $j$ divided by the total number of time steps $\Delta T_j$. The virtual mixture-arm for iteration $j$ is the $\widehat{p}_j$-mixture of the arms played in iteration $j$, denoted by $\widetilde{\nu}_j$. When \linucb samples from $\widetilde{\nu}_j$, it first draws a real arm $j_t \sim \widehat{p}_j$ with feature representation $A_t$,\footnote{If the index of another virtual mixture-arm is returned, we sample from that virtual mixture-arm until a real arm is returned.} then pull the real arm $A_t$ to obtain a reward $X_t = \langle \theta_\star, A_t \rangle + \eta_t$. The expected reward of virtual mixture-arm $\widetilde{\nu}_j$ can be expressed as $\langle \theta_\star, a_\star \rangle -  R_{\Delta T_j}/\Delta T_j$, where we use $R_{\Delta T_j}$ to denote the expected regret suffered in iteration $j$. Virtual mixture-arms $\widetilde{\nu}_j$ provide a convenient summary of the information gained in the $j$-th iterations so that we don't need to explore arms in the (effectively) $d_j$ dimensional space again.

\paragraph{Linear Bandits with Added Virtual Dimensions.} We consider the linear bandit problem in iteration $i$, where each arm in ${\cal A}^{(d_i)}$ is viewed as a vector in $\R^{d_i}$. Besides this simple truncation, we lift the feature representation of each arm into a slightly higher dimensional space to include the $i-1$ virtual mixture-arms constructed in previous iterations  (i.e., adding virtual dimensions). More specifically, we augment $i-1$ zeros to the feature representation of each truncated real arm $a \in \A^{(d_i)}$; we also view each virtual mixture-arm $\widetilde{\nu}_j$ as a $d_i +i-1$ dimensional vector $\widetilde{\nu}^{\langle d_i \rangle }_j$ with its $(d_i + j)$-th entry being $1$ and all other entries being $0$. As a result, \linucb will operate on an modified linear bandit problem with action set ${\cal A}^{\langle d_i \rangle } \subseteq  \R^{d_i + i - 1}$, where $	{\cal A}^{\langle d_i \rangle } = \curly*{ [a^{(d_i)}; 0] \in \R^{d_i + i - 1}:  a \in {\cal A} } \cup \curly*{ \widetilde{\nu}_j^{\langle d_i \rangle} }$, and $\vert {\cal A}^{\langle d_i \rangle } \vert =  K + i-1$.
Working with added virtual dimensions allows us to incorporate information stored in virtual mixture-arms without too much additional cost since $i \leq p = O(\log T)$.

\begin{remark}
    \label{rm:modification}
    Previous application of the virtual mixture-arms only works in continuum-armed bandits or $K$-armed bandits \citep{zhu2020regret, hadiji2019polynomial}, where no further modifications are needed to incorporate information stored in virtual mixture-arms. Besides the construction of the virtual dimension, we also provide another way to incorporate the virtual mixture-arms in \cref{sec:remove_assumption}. These modifications are important for the linear bandit case.
\end{remark}

\subsection{Analysis}
\label{sec:adaptive_analysis}

We first analyze \linucbPlus with the following assumption. A modified version of \linucbPlus (\cref{alg:linucbPlus_modified}) is provided in \cref{sec:remove_assumption} and analyzed without the assumption. 

\begin{assumption}
	\label{assumption:action_set}
An action set ${\cal A} \subseteq \R^d$ is expressive if we have $a^{[d_i]} = [a^{(d_i)};0] \in {\cal A}$ for any $a \in {\cal A}$ and $d_i < d$.
\end{assumption}

\cref{assumption:action_set} is naturally satisfied when certain combinatorial structure and ranking information are associated with the action set.
This is best explained with an example. Suppose the arms are consumer products and each has a subset of $d$ possible features, i.e., the arms are \emph{binary vectors} in ${\mathbb R}^d$ indicating the features of the product (the combinatorial aspect). Think of the features as being ordered from base-level features to high-end features (the ranking information). In this case, \cref{assumption:action_set} means that if a product $a \in {\cal A}$, then ${\cal A}$ also contains all products with fewer high-end features, i.e., truncations of action $a$. We also make the following two comments regarding \cref{assumption:action_set}.

\begin{enumerate}[leftmargin=.2in]
    \item The action set we used to construct the lower bound in \cref{thm:lower_bound} can be made expressive, as noted in \cref{rm:expressive_lower_bound} in \cref{app:lower_boud};
    \item Although the original version of \linucbPlus is analyzed with \cref{assumption:action_set}, it shows strong empirical performance even without such assumption (see \cref{sec:experiment}).
\end{enumerate}

Equipped with \cref{assumption:action_set}, we can replace the ``truncated'' action set $\A^{(d_i)}$ with real arms that actually exist in the action set. As a result, the linearity in rewards is preserved in the modified linear bandit problem in $\R^{d_i+i-1}$ with added virtual dimensions.
The modified linear bandit problem is associated with reward vector ${\theta}_\star^{\langle d_i \rangle } =  \sq*{\theta_\star^{(d_i)}; \widetilde{\mu}_1; \dots;  \widetilde{\mu}_{i-1}} \in \R^{d_i + i -1}$, where we use $\widetilde{\mu}_{i} = \langle \theta_\star, a_\star \rangle -  R_{\Delta T_{i}}/\Delta T_{i}$ to denote the expected reward of mixture-arm $\widetilde \nu_i$. In the $i$-th iteration of \linucbPlus, we invoke \linucb to learn reward vector ${\theta}_\star^{\langle d_i \rangle}\in \R^{d_i + i -1}$, which takes worst case regret proportional to $d_i + i -1$ instead of the ambient dimension $d$.

Since there are at most $O(\log T)$ iterations of \linucbPlus, we only need to upper bound its regret at each iteration. Suppose $S_i$ is the set of actions that \linucbPlus is working on at iteration $i$. We use $a_{S_i} = \argmax_{a \in S_i} \langle \theta_\star, a \rangle$ to denote the arm with the highest expected reward; and decompose the regret into approximation error and learning error:
\begin{align}
\label{eq:regret_decomposition}
 & R_{\Delta T_i} = \underbrace{ \E \left[ \Delta T_i \cdot \langle \theta_\star, a_\star - a_{S_i} \rangle  \right] }_{\text{expected approximation error due to the selection of $S_i$}} \\ \nonumber
&+ \underbrace{ \E \left[ \sum_{t=1}^{\Delta T_i} \langle \theta_\star, a_{S_i} - A_t \rangle\right]}_{\text{expected learning error due to the sampling rule $\{A_t\}_{t=1}^T$}}.
\end{align}

\paragraph{The Learning Error.} At each iteration $i$, \linucbPlus invokes \linucb on a linear bandit problem in $\R^{d_i +i - 1}$ for $\Delta T_i$ time steps, where $d_i$ and $\Delta T_i$ are specifically chosen such that ${d_i \, \Delta T_i} \leq \widetilde{O}(T^{2\beta})$. The learning error is then upper bounded by $\widetilde{O} (\sqrt{d_i \, \Delta T_i}) = \widetilde{O}(T^\beta)$ based on the regret bound of \linucb (the norm of reward vector $\theta_\star^{\ang{d_i}}$ increases with iteration $i$ due to added virtual dimensions, we deal with that in \cref{appendix:modified_linucb}).

\paragraph{The Approximation Error.} Let $i_\star \in [p]$ denote the largest integer such that $d_{i_\star} \geq d_\star$. For iterations $i \leq i_\star$, since $\theta_\star$ only has its first $d_\star \leq d_i$ coordinates being non-zero, we have $\max_{a \in \A^{\ang{d_i}}} \{ \ang*{\theta_\star^{\ang{d_i}}, a} \} = \ang{\theta_\star, a_\star}$ and the expected approximation error equals zero. As a result, we upper bound the expected regret for iteration $i \leq i_\star$ by its expected learning error, i.e., $R_{\Delta T_i} \leq \widetilde{O}(T^\beta)$. Now consider any iteration $i > i_\star$. Since the virtual mixture-arm $\widetilde{\nu}_{i_\star}$ is constructed by then, and its expected reward is $\widetilde{\mu}_{i_\star} = \langle \theta_\star, a_\star \rangle -  R_{\Delta T_{i_\star}}/\Delta T_{i_\star}$, we can further bound the expected approximation error by $\Delta T_i R_{\Delta T_{i_\star}}/\Delta T_{i_\star} = \widetilde{O}(T^{1+\alpha - \beta})$ (detailed in \cref{app:thm_linucbPlus}).

We now present the formal guarantees of \linucbPlus.

\begin{restatable}{theorem}{thmLinucbPlus}
	\label{thm:linucbPlus}
	Run \linucbPlus with time horizon $T$ and any user-specified parameter $\beta \in [1/2, 1)$ leads to the following upper bound on the expected regret:
	\begin{align}
	& \sup_{\omega \in {\cal H}_T(\alpha)}  R_T  \nonumber \\
	& =  O \left(  \log^{7/2} \left( KT\log T\right)  \cdot T^{\min \{\max\{\beta, 1  + \alpha - \beta \},1 \}} \right). \nonumber
	\end{align} 
\end{restatable}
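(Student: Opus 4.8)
The plan is to bound the regret iteration by iteration and sum over the at most $p=\lceil\log_2 T^\beta\rceil=O(\log T)$ iterations of \linucbPlus. Within iteration $i$, \linucb runs a genuine linear bandit problem on the (random) arm set $S={\cal A}^{\langle d_i\rangle}$ with reward vector $\theta_\star^{\langle d_i\rangle}$, so \cref{eq:regret_decomposition} splits the regret $R_{\Delta T_i}$ incurred there into an approximation error $\E[\Delta T_i\langle\theta_\star,a_\star-a_S\rangle]$ and a learning error. First I would dispose of the trivial regime: rewards lie in $[-1,1]$ so $R_T\le 2T$ always, and whenever $\max\{\beta,1+\alpha-\beta\}\ge 1$, i.e.\ $\alpha\ge\beta$, the claimed bound follows immediately. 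Hence assume $\alpha<\beta$; then $d_\star\le T^\alpha<2^p$ together with $d_\star\le d$ give $d_\star\le\min\{2^{p+1},d\}=d_1$, so $i_\star:=\max\{i\in[p]:d_i\ge d_\star\}$ is well defined with $1\le i_\star\le p$.

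For the learning error I would just quote \cref{lm:linucb_learning_error}, which gives $O(\log^{5/2}(KT\log T)\,T^\beta)$ in every iteration (this is where $d_i\Delta T_i\le 2^{2p+2}=O(T^{2\beta})$, the norm bounds $\|a\|\le1$ and $\|\theta_\star^{\langle d_i\rangle}\|\le\sqrt i$, and the $\sqrt2$-sub-Gaussianity of the rewards of virtual mixture-arms get used). The approximation error I would handle via $i_\star$. For $i\le i_\star$ we have $d_i\ge d_\star$, so $\theta_\star$ is supported on its first $d_i$ coordinates and, by \cref{assumption:action_set}, $[a_\star^{(d_i)};0]\in{\cal A}$ has expected reward $\langle\theta_\star,a_\star\rangle$; thus the global optimum is available in the extended problem, the approximation error is $0$, and $R_{\Delta T_i}\le\widetilde O(T^\beta)$ --- in particular $R_{\Delta T_{i_\star}}\le\widetilde O(T^\beta)$. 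For $i>i_\star$, the virtual mixture-arm $\widetilde\nu_{i_\star}$ is among the arms of iteration $i$, and by \cref{eq:expected_reward_virtual} and the tower rule its expected reward equals $\langle\theta_\star,a_\star\rangle-R_{\Delta T_{i_\star}}/\Delta T_{i_\star}$; so the approximation error of iteration $i$ is at most $\Delta T_i\, R_{\Delta T_{i_\star}}/\Delta T_{i_\star}\le T\, R_{\Delta T_{i_\star}}/\Delta T_{i_\star}$ (and this holds vacuously if iteration $i_\star$ never completes, since then no later iteration runs).

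The quantitative step that needs real care --- and where I expect the actual work to be --- is a matching lower bound on $\Delta T_{i_\star}$. If $i_\star=p$ no iteration follows $i_\star$ and there is nothing to bound; if $i_\star<p$, maximality forces $d_{i_\star+1}<d_\star\le T^\alpha\le d$, so the clip at $d$ is inactive, $2^{p+1-i_\star}=d_{i_\star+1}<T^\alpha$, hence $2^{i_\star}>2^{p+1}/T^\alpha\ge 2T^{\beta-\alpha}$ and $\Delta T_{i_\star}=\min\{2^{p+i_\star},T\}\ge\min\{2T^{2\beta-\alpha},T\}$. Feeding this into the previous bound, the approximation error of each iteration $i>i_\star$ is at most $\widetilde O(T^\beta)\cdot T/\min\{2T^{2\beta-\alpha},T\}=\widetilde O\big(\max\{T^{1+\alpha-\beta},T^\beta\}\big)=\widetilde O\big(T^{\max\{\beta,1+\alpha-\beta\}}\big)$, the two branches of the minimum matching those of the maximum since $2\beta-\alpha\ge1\iff 1+\alpha-\beta\le\beta$.

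Finally I would sum: each of the $O(\log T)$ iterations contributes learning error $\widetilde O(T^\beta)$ and approximation error $\widetilde O(T^{\max\{\beta,1+\alpha-\beta\}})$, so $\sup_{\omega\in{\cal H}_T(\alpha)}R_T\le O\big(\log^{7/2}(KT\log T)\,T^{\max\{\beta,1+\alpha-\beta\}}\big)$, the extra logarithm coming from the number of iterations; combining with $R_T\le 2T$ turns the exponent into $\min\{\max\{\beta,1+\alpha-\beta\},1\}$. The only things to keep an eye on throughout are the two clips (by $d$ and by $T$) in the definitions of $d_i$ and $\Delta T_i$, the boundary case $i_\star=p$, and the fact that $R_{\Delta T_{i_\star}}$ appears both inside the mean of $\widetilde\nu_{i_\star}$ and as the regret of the sub-run on iteration $i_\star$, which must be reconciled by taking expectations in the right order through \cref{eq:expected_reward_virtual}.
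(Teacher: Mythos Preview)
Your proposal is correct and follows essentially the same route as the paper: split each iteration's regret via \cref{eq:regret_decomposition}, use \cref{lm:linucb_learning_error} for the learning error, zero out the approximation error for $i\le i_\star$, and for $i>i_\star$ bound it by $T\cdot R_{\Delta T_{i_\star}}/\Delta T_{i_\star}$ via the virtual mixture-arm, then sum over the $O(\log T)$ iterations. The only cosmetic difference is your handling of the clip $\Delta T_{i_\star}=\min\{2^{p+i_\star},T\}$: you keep the $\min$ and carry it through to get $\max\{T^{1+\alpha-\beta},T^\beta\}$ directly, whereas the paper first observes that if $\Delta T_{i_\star}=T$ (or $i_\star=p$) no later iteration runs, and otherwise derives $\Delta T_{i_\star}>2^{2p}/T^\alpha$; both yield the same final bound.
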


The next theorem shows that \linucbPlus is Pareto optimal with \emph{any} input $\beta \in [1/2, 1)$.

\begin{restatable}{theorem}{pareto}
	\label{thm:pareto}
	The rate function achieved by \linucbPlus with any input $ \beta \in [1/2, 1)$, i.e.,
	\begin{align}
	\theta_{\beta}: \alpha \mapsto \min \{ \max \{ \beta, 1 + \alpha - \beta\}, 1\}, \label{eq:pareto_rate}
	\end{align}
	is Pareto optimal.
\end{restatable}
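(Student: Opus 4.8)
\textbf{Proof Proposal for \cref{thm:pareto}.}

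The plan is to verify Pareto optimality of $\theta_\beta$ by combining the achievability result (\cref{thm:linucbPlus}) with the lower bound on rate functions (\cref{thm:rate_lower_bound}). First I would observe that \cref{thm:linucbPlus} already shows $\theta_\beta(\alpha) = \min\{\max\{\beta, 1+\alpha-\beta\},1\}$ is \emph{achieved} by \linucbPlus (the polylogarithmic prefactor is absorbed in the $T^\epsilon$ slack in \cref{def:rate}), so it remains only to rule out the existence of a competing algorithm whose rate function $\theta'$ is strictly smaller in the pointwise order of \cref{def:pareto}. Suppose for contradiction such a $\theta'$ exists, so $\theta'(\alpha) \le \theta_\beta(\alpha)$ for all $\alpha \in [0,1]$ and $\theta'(\alpha_0) < \theta_\beta(\alpha_0)$ for some $\alpha_0$.

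The key step is to use the evaluation at $\alpha = 0$. Note $\theta_\beta(0) = \min\{\max\{\beta, 1-\beta\}, 1\} = \max\{\beta,1-\beta\} = \beta$ since $\beta \in [1/2,1)$. Hence $\theta'(0) \le \beta$. I would then invoke \cref{thm:rate_lower_bound} applied to the algorithm achieving $\theta'$: for every $\alpha$,
\begin{align}
\theta'(\alpha) \ge \min\{ \max\{ \theta'(0),\, 1 + \alpha - \theta'(0)\},\, 1\}. \nonumber
\end{align}
The right-hand side is a function of $\theta'(0) =: c \in [1/2, \beta]$. The next observation is monotonicity in $c$: for fixed $\alpha$, the map $c \mapsto \min\{\max\{c, 1+\alpha-c\},1\}$ is non-increasing on $[1/2,1]$ — indeed $\max\{c, 1+\alpha-c\}$ is minimized when the two branches balance, and since $c \le \beta < 1 \le 1+\alpha-c$ would require checking, the relevant regime is where $1+\alpha-c \ge c$, i.e. $c \le (1+\alpha)/2$, on which the max equals $1+\alpha-c$, decreasing in $c$; on the complementary regime $c \ge (1+\alpha)/2$ the max equals $c$, but there the value $c \ge (1+\alpha)/2 \ge 1+\alpha - c$ so the expression only gets larger. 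Either way, decreasing $c$ from $\beta$ down toward $1/2$ can only increase the lower bound, so taking $c = \beta$ (its largest allowed value) gives the \emph{weakest} bound:
\begin{align}
\theta'(\alpha) \ge \min\{ \max\{ \beta,\, 1 + \alpha - \beta\},\, 1\} = \theta_\beta(\alpha). \nonumber
\end{align}
Combined with $\theta'(\alpha) \le \theta_\beta(\alpha)$, this forces $\theta' \equiv \theta_\beta$, contradicting strict smallness at $\alpha_0$.

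The main obstacle I anticipate is the case analysis hidden in the monotonicity claim for $c \mapsto \min\{\max\{c,1+\alpha-c\},1\}$ on $[1/2,1]$, and in particular making sure that $c=\beta < 1$ does not let the clipping at $1$ destroy the argument — one must check that $\max\{\beta, 1+\alpha-\beta\} \ge \max\{c, 1+\alpha-c\}$ holds \emph{before} clipping for all $c \le \beta$, and that clipping is order-preserving. A second, more minor subtlety is confirming that \cref{thm:rate_lower_bound} is applicable with $\theta'(0) = \beta < 1$ (its hypothesis only requires $\theta(0) \in [1/2,1]$, which holds). Once these are dispatched, the contradiction is immediate and the proof closes; I would also remark in passing that the family $\{\theta_\beta\}_{\beta \in [1/2,1)}$ is totally incomparable in the pointwise order, which is why the whole lower envelope in \cref{fig:pareto_rate} is realized.
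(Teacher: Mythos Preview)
Your proof has a genuine gap in the monotonicity step. The claim that $c \mapsto \min\{\max\{c, 1+\alpha-c\}, 1\}$ is non-increasing on $[1/2, 1]$ is false: take $\alpha = 0$, so the function reduces to $c \mapsto \min\{c, 1\} = c$ on $[1/2,1]$, which is \emph{increasing}. More generally, whenever $\alpha < 2\beta - 1$ (so $(1+\alpha)/2 < \beta$) the interval $((1+\alpha)/2,\, \beta]$ lies in the regime where the max equals $c$, and decreasing $c$ there \emph{decreases} the bound. Hence from $\theta'(0) = c < \beta$ you cannot conclude $\theta'(\alpha) \geq \theta_\beta(\alpha)$ for small $\alpha$; at $\alpha = 0$ the lower bound only returns the tautology $\theta'(0) \geq \theta'(0)$, so the deduction $\theta' \equiv \theta_\beta$ is unsupported.

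The paper's proof repairs this by splitting on the value of $\theta'(0)$. If $\theta'(0) = \beta$, then \cref{thm:rate_lower_bound} directly yields $\theta' \geq \theta_\beta$ and hence $\theta' = \theta_\beta$, a contradiction; this part your argument does cover. If $\theta'(0) = \beta' < \beta$, one does \emph{not} attempt a uniform comparison; instead one evaluates at the single point $\alpha_0 = 2\beta - 1$, where $\theta_\beta(\alpha_0) = \beta$ but \cref{thm:rate_lower_bound} gives $\theta'(\alpha_0) \geq \theta_{\beta'}(\alpha_0) = \min\{2\beta - \beta',\,1\} > \beta$, already contradicting $\theta' \leq \theta_\beta$. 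The missing idea is precisely that the curves $\theta_{\beta'}$ and $\theta_\beta$ are incomparable (they cross), so the contradiction must come from a well-chosen $\alpha$ rather than from pointwise domination.
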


\subsection{Removing \cref{assumption:action_set}}
\label{sec:remove_assumption}

\cref{assumption:action_set} is used to preserve linearity when working with truncated action sets. In general, one should not expect to deal with misspecified linear bandits without extra cost: \citet{lattimore2020learning} develop a regret lower bound $\Omega(\epsilon \sqrt{d} \, T)$ for misspecified linear bandits with misspecification level $\epsilon$. The lower bound scales linearly with $T$ if there is no extra control/assumptions on the misspecified level $\epsilon$.

Going back to our algorithm, however, we notice that there is a special structure in the source of misspecifications: the virtual-mixture arms are never misspecified. We explore this fact and provide a modified version of \cref{alg:linucbPlus} (i.e., \cref{alg:linucbPlus_modified}) that works \emph{without} \cref{assumption:action_set} and is Pareto optimal. The modified algorithm is less practical since it invokes \smoothCorral as a subroutine (see \cref{sec:experiment}).

\begin{algorithm}[]
	\caption{\linucbPlusCorral}
	\label{alg:linucbPlus_modified} 
	\renewcommand{\algorithmicrequire}{\textbf{Input:}}
	\renewcommand{\algorithmicensure}{\textbf{Output:}}
	\begin{algorithmic}[1]
		\REQUIRE Time horizon $T$ and a user-specified parameter $\beta \in [1/2, 1)$.
		\STATE \textbf{Set:} $p = \lceil \log_2 T^\beta \rceil$, $d_i = \min\{2^{p+2-i}, d\}$ and $\Delta T_i = \min \{ 2^{p + i}, T\}$.
		\FOR {$i = 1, \dots, p$}
		\STATE Construct two (smoothed) base algorithms: (1) a \linucb algorithm working with action set $\cA^{(d_i)}$; and (2) a \ucb algorithm working with the set of virtual mixture-arms (if any), i.e., $\{\widetilde{\nu}_j\}_{j < i}$. Invoke \smoothCorral as the master algorithm with learning rate $\eta = 1/\sqrt{d_i \Delta T_i}$.
		\STATE Construct a virtual mixture-arm $\widetilde{\nu}_i$ based on the empirical sampling frequencies in iteration $i$.
		\ENDFOR 
	\end{algorithmic}
\end{algorithm}

We defer detailed discussion on \cref{alg:linucbPlus_modified} and \smoothCorral to \cref{app:remove_assumption}. 
We state the guarantee of \cref{alg:linucbPlus_modified} next.

\begin{restatable}{theorem}{thmLinUCBPlusModified}
\label{thm:linucbPlus_modified}
With any input $\beta \in [1/2,1)$, the rate function achieved by \cref{alg:linucbPlus_modified} (without \cref{assumption:action_set}) is Pareto optimal.
\end{restatable}

\section{EXPERIMENTS}
\label{sec:experiment}

We empirically evaluate our algorithms \linucbPlus and \linucbPlusCorral in this section. 
We find that \linucbPlus enjoys superior performance compared to existing algorithms.
Although \cref{assumption:action_set} is needed in the analysis of \linucbPlus, our experiments show that \linucbPlus is fairly robust to the existence of such assumption.

\begin{figure}[h]
     \centering
     \subfloat[]{\includegraphics[width=.25\textwidth]{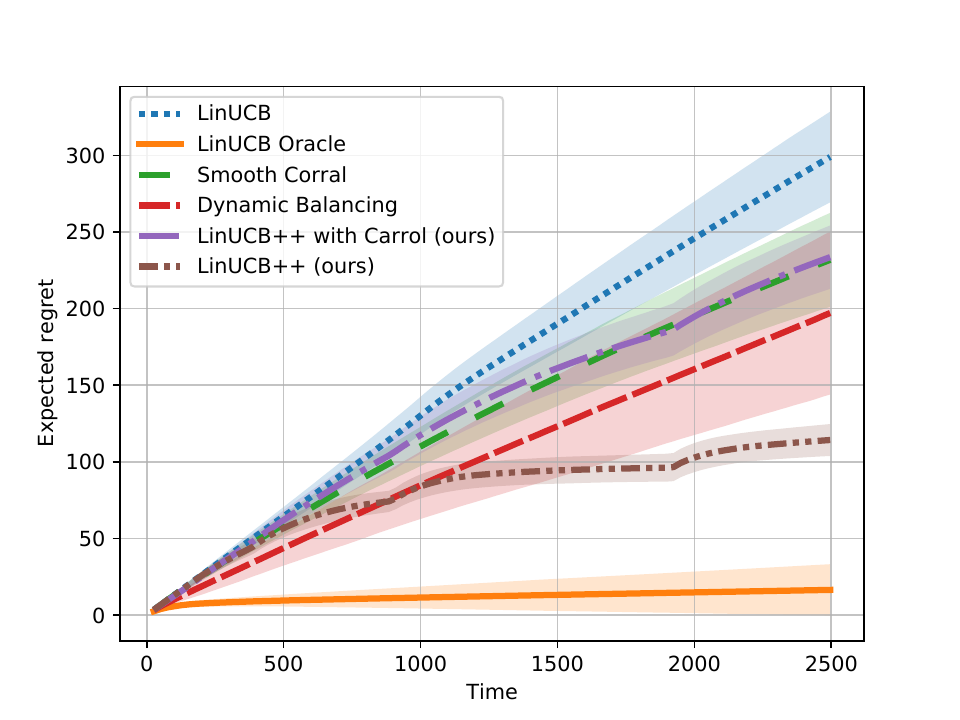}\label{fig:non_expressive_curve}}
     \subfloat[]{\includegraphics[width=.25\textwidth]{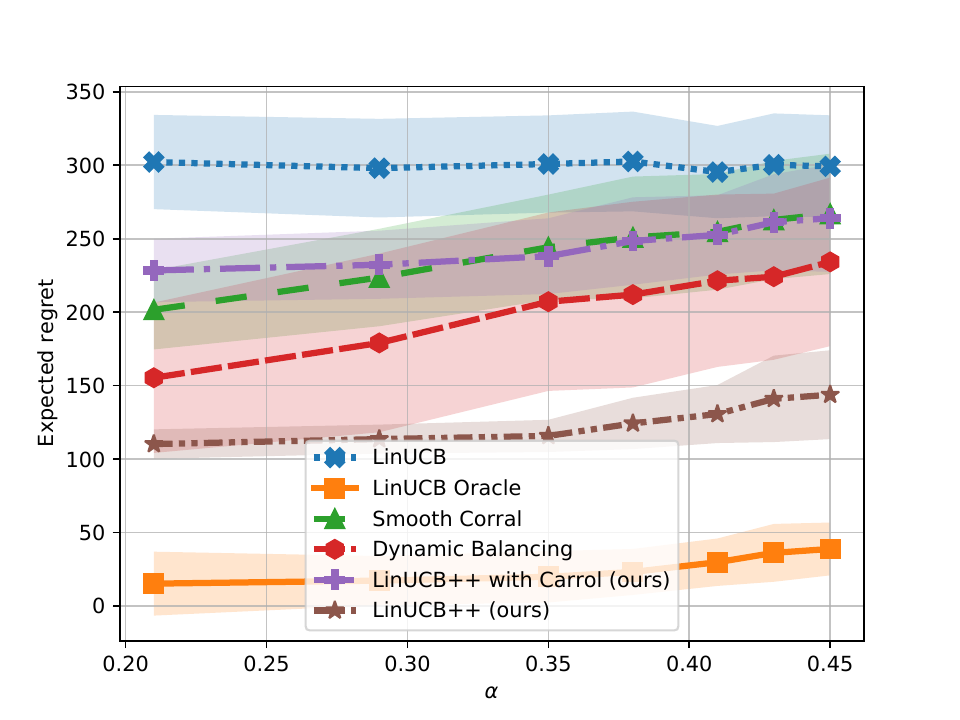}\label{fig:non_expressive_alpha}}
     \caption{Experiments \emph{without} \cref{assumption:action_set}: (a) Regret Curve Comparison with $\alpha \approx 0.32$. (b) Regret Comparison with Different $\alpha$.}
     \label{fig:non_expressive}
\end{figure}

We compare \linucbPlus and \linucbPlusCorral with four baselines: \linucb \citep{chu2011contextual}, \linucbOracle, \smoothCorral \citep{pacchiano2020model} and \dynamicBalancing \citep{cutkosky2021dynamic}.
\linucb is the standard linear bandit algorithm that works in the ambient dimension $\R^d$. \linucbOracle represents the oracle version of \linucb: it takes the knowledge of the instrinsic dimension $d_\star$ and works in $\R^{d_\star}$. \smoothCorral and \dynamicBalancing are implemented with $M = \ceil{\log_2 d}$ base \linucb learners with different dimensions $d_i \in \curly{2^0, 2^1, \dots, 2^{M-1}}$; their master algorithms conduct corraling/regret balancing on top of these base learners. 
We set $\beta = 0.5$ in \linucbPlus and \linucbPlusCorral.\footnote{In practice, we recommend taking $\beta = {(1+\widehat \alpha)}/{2}$ if an estimation $\widehat \alpha$ (of $\alpha$) is available; otherwise, we empirically find that taking $\beta = 0.5$ works well.}
The regularization parameter $\lambda$ for least squares in (all subroutines/base learners of) \linucb is set as $0.1$.

We first conduct experiments \emph{without} an expressive action set (i.e., without \cref{assumption:action_set}).
We consider a regret minimization problem with time horizon $T = 2500$ and a bandit instance consists of $K=1200$ arms selected uniformly at random in the $d=600$ dimensional unit ball. 
We set reward parameter $\theta_\star = [1/\sqrt{d_\star}, \dots, 1/\sqrt{d_\star}, 0, \dots, 0]^\top \in \R^d$ for any intrinsic dimension $d_\star$ (see \cref{app:experiment} for experiments with other choices of $\theta_\star$).
To prevent lengthy exploration over exploitation, we consider Gaussian noises with zero means and $0.1$ standard deviations.
We evaluate each algorithm on $100$ independent trials and average the results. \cref{fig:non_expressive_curve} shows how regret curves of different algorithms increase. The experiment is run with intrinsic dimension $d_\star = 12$, which corresponds to a hardness level $\alpha \approx 0.32$. \linucbPlus outperforms all other algorithms (except \linucbOracle), and enjoys the smallest variance. \linucbPlus (almost) flatten its regret curve at early stages, indicating that it has learned the true reward parameter. \cref{fig:non_expressive_alpha} illustrates the performance of algorithms with respect to different intrinsic dimensions. We run experiments with $d_\star \in \{5,10,15,20,25,30, 35\}$, and mark the corresponding $\alpha$ values in the plot. Across all $\alpha$ values, \linucbPlus shows superior performance compared to \linucb, \smoothCorral, \dynamicBalancing and \linucbPlusCorral. 
These results indicate that \linucbPlus can be practically applied without an expressive action set (thus without \cref{assumption:action_set}).

The empirically poor performance of \corral-type of algorithms might be due to the fact that they need to balance over multiple base algorithms. 
On the other hand, \linucbPlus invokes only one \linucb subroutine at each iteration. Although the subroutine is restarted at the beginning of each iteration, it runs on (roughly) geometrically decreasing dimensions. Such efficient learning procedure is backed by our construction of virtual mixture-arms and virtual dimensions.

\begin{figure}[h]
     \centering
     \subfloat[]{\includegraphics[width=.25\textwidth]{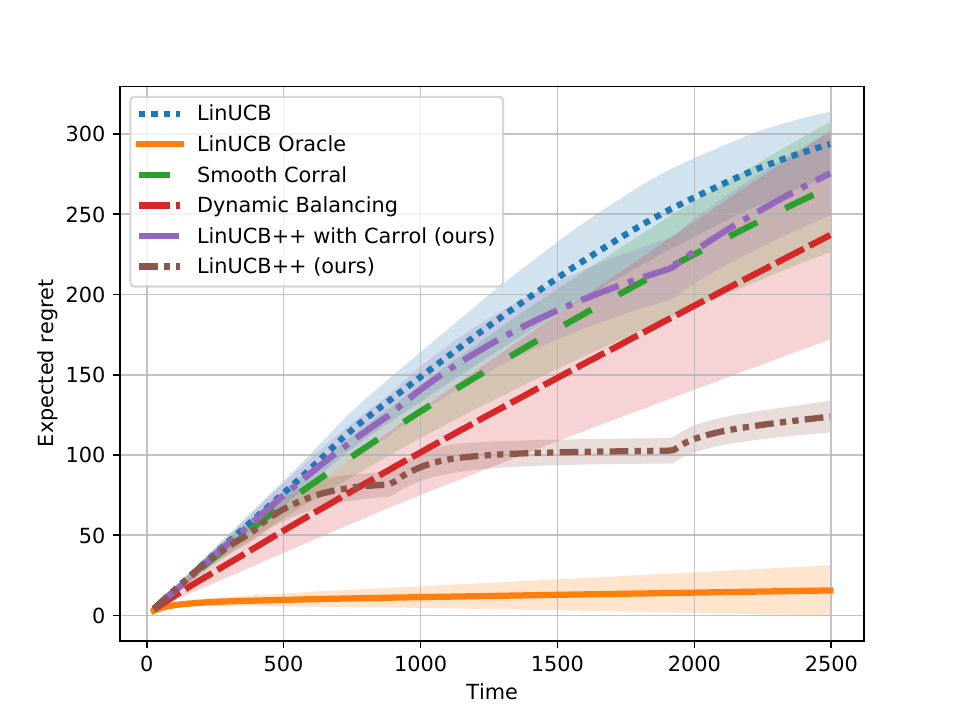}\label{fig:expressive_curve}}
     \subfloat[]{\includegraphics[width=.25\textwidth]{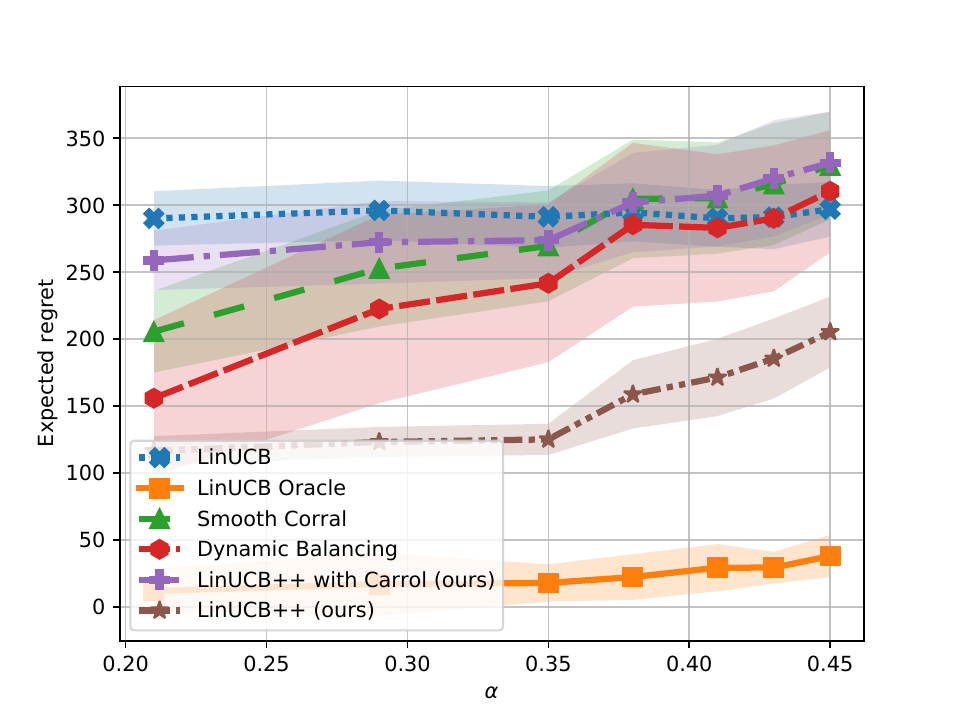}\label{fig:expressive_alpha}}
     \caption{Similar Experiment Setups to Those Shown in \cref{fig:non_expressive}, but with Expressive Action Sets.}
     \label{fig:expressive}
\end{figure}

We also run experiments \emph{with} expressive action sets.
We first generate $K=800$ arms uniformly at random from a $d=400$ dimensional unit ball. The action set is then made \emph{expressive} by adding actions with truncated features.\footnote{We only truncate actions with respect to $d_i$\,s selected by \linucbPlus to avoid the computational burden of dealing with a large number of actions.}
We provide the expressive action set to all algorithms since the best reward could be achieved by a truncated arm.
Other experimental setups are similar to the ones described before.
The shape of curves appearing in both \cref{fig:expressive_curve} and \cref{fig:expressive_alpha} are resembles the ones in \cref{fig:non_expressive}, and \linucbPlus outperforms \linucb, \smoothCorral, \dynamicBalancing and \linucbPlusCorral. One slight difference is that \smoothCorral, \dynamicBalancing, \linucbPlusCorral and \linucbPlus have relatively worse performance when as $\alpha$ increases: 
The regret curves (in \cref{fig:expressive_alpha}) increase at faster speeds.
\smoothCorral, \dynamicBalancing and \linucbPlusCorral are outperformed by the standard \linucb when the hardness level $\alpha$ gets large.

\section{DISCUSSION}
\label{sec:discussion}

We study the model selection problem in linear bandits where the goal is to adapt to the \emph{unknown} intrinsic dimension $d_\star$, rather than suffering from regret proportional to the ambient dimension $d$. We establish a lower bound indicating that adaptation to the unknown intrinsic dimension $d_\star$ comes at a cost: There is no algorithm that can achieve the regret bound $\widetilde{O}(\sqrt{d_\star T})$ simultaneously for all values of $d_\star$. Under a mild assumption, we design a Pareto optimal algorithm, with ideas fundamentally different from ``testing'' \citep{foster2019model, ghosh2020problem} and ``corralling'' \citep{pacchiano2020model, agarwal2017corralling}, to bear on the model selection problem in linear bandits. We also provide a workaround to remove the assumption. Experimental evaluations show superior performance of our main algorithm compared to existing ones.

Although linear bandits with a fixed action set are commonly studied in the literature \citep{lattimore2020learning, wagenmaker2021experimental}, an interesting direction is to generalize \linucbPlus to the contextual setting. The current version of \linucbPlus works in the setting with adversarial contexts under the following two additional assumptions: (1) we have a nested sequence of action sets $\A_t \subseteq \A_{t+1}$ with $|\A_T| \leq K$; and (2) one of the best/near-optimal arm belongs to $\A_1$. How to remove/weaken these assumptions is left to future work.
We also remark that, after our initial (arXiv) publication, \citet{marinov2021pareto} established the Pareto frontier for general contextual bandits, providing a negative answer to open problems raised in \citet{foster2020open}.

\subsubsection*{Acknowledgements}
We thank anonymous reviewers for helpful comments. 
This work is partially supported by NSF grant 1934612 and ARMY MURI grant W911NF-15-1-0479.

\bibliographystyle{plainnat}
\bibliography{refs}


\clearpage
\appendix

\thispagestyle{empty}

\onecolumn \makesupplementtitle

\section{OMITTED PROOFS FOR SECTION \ref{sec:lower_bound}}

Besides specific treatments for linear bandits (e.g., the lower bound construction for model selection), our proofs for this section largely follow the ones developed in \citet{hadiji2019polynomial, zhu2020regret}. We provide details here for completeness.

\subsection{Proof of \cref{thm:lower_bound}}
\label{app:lower_boud}
We consider $K+1$ linear bandit instances such that each is characterized by a reward vector $\theta_i \in \R^{d}$, $0 \leq i \leq K$, with different intrinsic dimensions $d_\star$ (or equivalently $\alpha$). For any action $a \in \R^d$, we obtain a reward $r = \langle \theta_i, a \rangle + \eta$ where $\eta$ is an independent $(1/2)$-sub-Gaussian noise. Time horizon $T$ is fixed and the ambient dimension $d$ is assumed to be large enough to avoid some trivial conflicts in the following construction (e.g., we need $d \geq T^\alpha$ to construct $\theta_i$) . For any $0 \leq \alpha^\prime < \alpha \leq 1$ so that $T^\alpha/2 \geq T^{\alpha^\prime}$, we now provide an explicit construction of $\{\theta_i\}_{i=0}^{K}$ as followings, with $\Delta \in \R$ to be specified later. 
\begin{enumerate}
    \item Let $\theta_0 \in \R^{d}$ be any vector such that it is only supported on one of its first $\lfloor T^{\alpha^\prime} \rfloor$ coordinates and $\| \theta_0 \|_2 = \Delta/2$. The regret minimization problem with respect to $\theta_0$ belongs to ${\cal H}_T(\alpha^\prime)$ by construction.
    
    \item For any $i \in [K]$, let $\theta_i = \theta_0 + \Delta \cdot e_{\rho(i)}$ where $e_j$ is the $j$-th canonical base and $\rho(i) = \lfloor T^{\alpha}/2 \rfloor + i$. We set $K = \lfloor T^{\alpha}/2 \rfloor = \Theta(T^\alpha)$ so that the regret minimization problem with respect to any $\theta_i$ belongs to ${\cal H}_T(\alpha)$.
\end{enumerate}
We consider a common \emph{fixed} action set ${\cal A} = \{a_i\}_{i=0}^K  = \{\theta_0 / \norm*{\theta_0}\} \cup \{ e_{\rho(i)} \}_{i=1}^{K}$ for all regret minimization problems (we set $a_0 = \theta_0 / \norm*{\theta_0}$ and $a_i = e_{\rho(i)}$ for convenience). We could notice that $a_0$ is the best arm with respect to $\theta_0$, which has expected reward $\Delta/2$; and $a_i$ is the best arm with respect to $\theta_i$, which has expected reward $\Delta$.

\begin{remark}
\label{rm:expressive_lower_bound}
    The action set $\A$ can be made expressive by augmenting the action set with an all-zero action. The all-zero action will not affect our analysis since it always has zero expected reward.
\end{remark}

\begin{remark}
    One can also add other canonical bases into the action set $\cA$ so that $\curly*{\theta_i}_{i=1}^K$ becomes the unique reward vector for corresponding problems. These additional actions will not affect our analysis as well since they all have zero expected reward.
\end{remark}

For any $t \in [T]$, the tuple of random variables $H_t = (A_1, X_1, \dots, A_t, X_t)$ is the outcome of an algorithm interacting with an bandit instance up to time $t$. Let $\Omega_t = \prod_{i=1}^t (\A \times \R)$ and ${\cal F}_t = \mathfrak{B}(\Omega_t)$; one could then define a measurable space $(\Omega_t, {\cal F}_t)$ for $H_t$. The random variables $A_1, X_1, \dots, A_t, X_t$ that make up the outcome are defined by their coordinate projections:
\begin{align}
A_t(a_1, x_1, \dots, a_t, x_t) = a_t \quad \mbox{and} \quad X_t(a_1, x_1, \dots, a_t, x_t) = x_t. \nonumber
\end{align}
For any fixed algorithm/policy $\pi$ and bandit instance $\theta_i$, we are now constructing a probability measure $\P_{i, t}$ over $(\Omega_t, {\cal F}_t)$. Note that a policy $\pi$ is a sequence $(\pi_t)_{t=1}^T$, where $\pi_t$ is a probability kernel from $(\Omega_{t-1}, {\cal F}_{t-1})$ to $(\A, 2^{\A})$ with the first probability kernel $\pi_1(\omega, \cdot)$ being defined arbitrarily over $(\A, 2^{\A})$, to model the selection of the first action. For each $i$, we define another probability kernel $p_{i, t}$ from $(\Omega_{t-1} \times \A, {\cal F}_{t-1} \otimes 2^{\A})$ to $(\R, \mathfrak{B}(\R))$ that models the reward. Since the reward is distributed according to ${\cal N}(\theta_i^{\top} a_t, 1/4 )$, we gives its explicit expression for any $B \in \mathfrak{B}(\R)$ as following
\begin{align}
p_{i, t} \big( (a_1, x_1, \dots, a_t), B\big) =  \bigintssss_B \sqrt{\frac{2}{\pi}} \exp \big( - 2 (x-\theta_i^{\top} a_t ) \big) dx .\nonumber 
\end{align}
The probability measure over $\P_{i, t}$ over $(\Omega_t, {\cal F}_t)$ could then be define recursively as $\P_{i, t} = p_{i, t} \big( \pi_{t} \P_{i, t-1} \big)$. We use $\E_i$ to denote the expectation taken with respect to $\P_{i, T}$. We have the following lemmas.

\begin{lemma}[\citet{lattimore2020bandit}]
	\label{lm:KL_decomposition}
	\begin{align}
	\kl \left( \P_{0, T}, \P_{i, T} \right) = \E_{0} \left[ \sum_{t=1}^T \kl\left( {\cal N}(\theta_0^{\top} A_t, 1/4), {\cal N} \left( \theta^{\top}_i A_t, 1/4 \right) \right) \right]. \label{eq:KL_decomposition}
	\end{align}
\end{lemma}
\begin{lemma}[\cite{hadiji2019polynomial}]
	\label{lm:pinsker}
	Let $\P$ and $\Q$ be two probability measures. For any random variable $Z \in [0, 1]$, we have \begin{align}
	|\E_{\P}[Z] - \E_{\Q}[Z]| \leq \sqrt{\frac{\kl(\P, \Q)}{2}}. \nonumber
	\end{align}
\end{lemma}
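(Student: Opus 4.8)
The plan is to prove the inequality in two stages: first bound the expectation gap by the total variation distance $\delta(\Prob, \Q) := \sup_A |\Prob(A) - \Q(A)|$, and then invoke Pinsker's inequality $\delta(\Prob, \Q) \le \sqrt{\kl(\Prob, \Q)/2}$ to conclude. The chaining $|\E_{\Prob}[Z] - \E_{\Q}[Z]| \le \delta(\Prob,\Q) \le \sqrt{\kl(\Prob,\Q)/2}$ is exactly the claim.

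For the first stage, I would fix a common dominating measure $\mu$ (e.g.\ $\mu = (\Prob + \Q)/2$) and write $p = d\Prob/d\mu$ and $q = d\Q/d\mu$. Since $0 \le Z \le 1$, the integrand $Z(p-q)$ is bounded above by $p-q$ on the set $\{p > q\}$ and is nonpositive on $\{p \le q\}$, so
\begin{align}
\E_{\Prob}[Z] - \E_{\Q}[Z] = \int Z\,(p-q)\,d\mu \le \int_{\{p>q\}} (p-q)\,d\mu = \delta(\Prob, \Q), \nonumber
\end{align}
where the last equality is the standard identity for total variation. Applying the same argument to $1 - Z$ (equivalently, swapping the roles of $\Prob$ and $\Q$) yields the matching reverse bound, so $|\E_{\Prob}[Z] - \E_{\Q}[Z]| \le \delta(\Prob, \Q)$.

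For the second stage, I would prove Pinsker's inequality by reduction to the Bernoulli case. For any event $A$, the map $\omega \mapsto \mathbf{1}_A(\omega)$ pushes $\Prob$ and $\Q$ forward to the distributions $\mathrm{Bernoulli}(\Prob(A))$ and $\mathrm{Bernoulli}(\Q(A))$; the data-processing inequality for $\kl$ then gives $\kl(\Prob, \Q) \ge d(\Prob(A) \,\|\, \Q(A))$, where $d(a \| b) = a\ln(a/b) + (1-a)\ln((1-a)/(1-b))$ is the binary relative entropy (in natural log, consistent with the $1/2$ constant). Combining with the scalar estimate $d(a\|b) \ge 2(a-b)^2$ gives $\kl(\Prob, \Q) \ge 2(\Prob(A) - \Q(A))^2$ for every $A$; taking the supremum over $A$ yields $\kl(\Prob, \Q) \ge 2\,\delta(\Prob, \Q)^2$, i.e.\ $\delta(\Prob, \Q) \le \sqrt{\kl(\Prob, \Q)/2}$.

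The main obstacle is the scalar Bernoulli inequality $d(a\|b) \ge 2(a-b)^2$, which carries all the real content. I would establish it by fixing $b$ and setting $g(a) = d(a\|b) - 2(a-b)^2$; a direct computation gives $g(b) = 0$ and $g'(b) = 0$, while $g''(a) = \frac{1}{a(1-a)} - 4 \ge 0$ on $[0,1]$ since $a(1-a) \le 1/4$. Convexity together with the vanishing value and derivative at $a = b$ forces $g \ge 0$ throughout, which is precisely the needed bound. Everything else — the sign analysis of $p-q$, the identity for $\delta$, and the data-processing step — is routine packaging.
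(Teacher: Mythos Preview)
Your proof is correct: the two-stage chain through total variation followed by Pinsker's inequality is the standard route, and the convexity argument for the Bernoulli bound $d(a\|b) \ge 2(a-b)^2$ is clean. The paper itself does not supply a proof of this lemma --- it is stated there as a cited result from \citet{hadiji2019polynomial} and used as a black box in the lower-bound construction --- so there is no in-paper proof to compare against.
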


\lowerBound*

\begin{proof}
	Let $N_{i}(T) = \sum_{t=1}^T \mathds{1}\left( A_t = a_i\right)$ denote the number of times the algorithm $\pi$ selects arm $a_i$ up to time $T$. Let $R_{i, T}$ define the expected regret achieved by algorithm $\pi$ interacting with the bandit instance ${\theta}_i$. Based on the construction of bandit instances, we have 
	\begin{align}
	R_{0, T} \geq \frac{\Delta}{2} \sum_{i=1}^K \E_{0} \left[ N_{i}(T) \right], \label{eq:regret_0}
	\end{align}
	and for any $i \in [K]$
	\begin{align}
	R_{i, T} \geq \frac{\Delta}{2} \left( T - \E_{i} [N_{i}(T)] \right) = \frac{T\Delta}{2} \left( 1- \frac{\E_{i} [N_{i}(T)]}{T} \right). \label{eq:regret_i}
	\end{align}
	According to \cref{lm:KL_decomposition} and the calculation of $\kl$-divergence between two Gaussian distributions, we further have
	\begin{align}
	\kl(\P_{0, T}, \P_{i, T}) & = \E_{0} \left[ \sum_{t=1}^T \kl\left( {\cal N}(\theta_0^{\top}A_t, 1/4), {\cal N} \left( \theta_i^{\top}A_t, 1/4 \right) \right) \right] \nonumber\\
	& = \E_{0} \left[ \sum_{t=1}^T 2 \left\langle \theta_i - \theta_0, A_t \right \rangle^2 \right] \nonumber \\
	& = 2 \E_{0} \left[ N_{i}(T) \right] \Delta^2, \label{eq:kl_difference}
	\end{align}
	where \cref{eq:kl_difference} comes from the fact that $\theta_i = \theta_0 + \Delta \cdot e_{\rho(i)}$ and the only arm in ${\cal A}$ with non-zero value on the $\rho(i)$-th coordinate is $a_i = e_{\rho(i)}$, with $\ang*{\theta_i - \theta_0, a_i} = \Delta$.
	
	We now consider the average regret over $i \in [K]$:
	\begin{align}
	\frac{1}{K} \sum_{i=1}^K R_{i, T} &  \geq  \frac{T \Delta}{2} \left(1 - \frac{1}{K} \sum_{i=1}^K \frac{\E_i [N_{i}(T)]}{T} \right) \nonumber \\
	& \geq \frac{T \Delta}{2} \left(1- \frac{1}{K}\sum_{i=1}^K \left(\frac{\E_0 [N_{i}(T)]}{T} + \sqrt{\frac{\kl(\P_{i, T}, \P_{0, T})}{2}}  \right) \right) \label{eq:ave_regret_pinsker} \\
	& = \frac{T \Delta}{2} \left(1- \frac{1}{K} \frac{\sum_{i=1}^K \E_0 [N_{i}(T)]}{T} - \frac{1}{K}\sum_{i=1}^K \sqrt{{\E_{0} \left[ N_{i}(T) \right] \Delta^2}}   \right) \label{eq:ave_regret_decomposition} \\
	& \geq \frac{T \Delta}{2} \left(1 - \frac{1}{K} - \sqrt{\frac{\sum_{i=1}^K \E_{0} \left[ N_{i}(T) \right] \Delta^2}{K}} \right) \label{eq:ave_regret_concave}\\
	& \geq \frac{T \Delta}{2} \left(1 - \frac{1}{K} - \sqrt{\frac{2 \Delta R_{0, T}}{K}} \right) \label{eq:ave_regret_0} \\
	& \geq \frac{T \Delta}{2} \left(\frac{1}{2} - \sqrt{\frac{2 \Delta B}{K}} \right), \label{eq:ave_regret}
	\end{align}
	where \cref{eq:ave_regret_pinsker} comes from applying \cref{lm:pinsker} with $Z = {N_{i}(T)}/{T}$ and $\P = \P_{i, T}$ and $\Q = \P_{0, T}$; \cref{eq:ave_regret_decomposition} comes from \cref{lm:KL_decomposition}; \cref{eq:ave_regret_concave} comes from concavity of $\sqrt{\cdot}$; \cref{eq:ave_regret_0} comes from \cref{eq:regret_0}; and finally \cref{eq:ave_regret} comes from the fact that $K \geq 2$ by construction and the assumption that $R_{0, T} \leq B$.
	
	To obtain a large value for \cref{eq:ave_regret}, one could maximize $\Delta$ while still make sure $\sqrt{2 \Delta B/K} \leq 1/4$. Set $\Delta = 2^{-5}K B^{-1}$, following \cref{eq:ave_regret}, we obtain 
	\begin{align}
	\frac{1}{K} \sum_{i=1}^K R_{i, T} & \geq 2^{-8} T K B^{-1} \nonumber \\
	& = 2^{-8} T \left\lfloor T^{\alpha}/2  \right\rfloor B^{-1} \label{eq:ave_regret_K}\\
	& \geq 2^{-10} T^{1 + \alpha} B^{-1}, \label{eq:ave_regret_final}
	\end{align}
	where \cref{eq:ave_regret_K} comes from the construction of $K$; and \cref{eq:ave_regret_final} comes from the assumption that $\lfloor T^{\alpha}/2  \rfloor \geq T^{\alpha}/4$.
	
	It is clear that any action $a \in \A$ satisfies $\norm*{a} \leq 1$ by construction, we now only need to make sure that $\norm*{\theta_i} \leq 1$ as well. Notice that $\norm*{\theta_i} \leq \sqrt{5}\Delta/2$ by construction, we only need to make sure $\Delta = 2^{-5}K B^{-1} \leq 2/\sqrt{5}$. Since on one hand $K =\lfloor T^{\alpha}/2 \rfloor \leq T^\alpha$, and on the other hand $T^{\alpha} \leq B$ by assumption, we have $\Delta = 2^{-5} K B^{-1} \leq 2^{-5} < 2/\sqrt{5}$, as desired.
\end{proof}

\subsection{Proof of \cref{thm:rate_lower_bound}}

\begin{lemma} 
	\label{lm:rate_comparison}
	Suppose an algorithm achieves rate function $\theta(\alpha)$ on ${\cal H}_T (\alpha)$, then for any $0 < \alpha \leq 1$ such that $\alpha \leq \theta(0)$, we have 
	\begin{align}
	\theta(\alpha) \geq 1 + \alpha - \theta(0). \label{eq:rate_func}
	\end{align}
\end{lemma}
\begin{proof}
	Fix $0 \leq \alpha \leq \theta(0)$. For any $\epsilon > 0$, there exists constant $c_1$ and $c_2$ such that
	\begin{align}
	\sup_{\omega \in {\cal H}_T(0)} R_T \leq c_1 T^{\theta(0) + \epsilon} \quad \mbox{and} \quad \sup_{\omega \in {\cal H}_T(\alpha)} R_T \leq  c_2 T^{\theta(\alpha) + \epsilon}, \nonumber 
	\end{align}
	for sufficiently large $T$.
	Let $B = \max\{c_1,1\} \cdot T^{\theta(0) + \epsilon}$, we could see that $T^{\alpha} \leq T^{\theta(0)} \leq B$ holds by assumption. For $T$ large enough, the condition $\lfloor T^\alpha/2 \rfloor \geq \max\{ T^\alpha/4, T^0, 2\}$ of \cref{thm:lower_bound} holds, and we then have
	\begin{align}
	c_2 T^{\theta(\alpha) + \epsilon} \geq 2^{-10} T^{1 + \alpha} \left( \max\{c_1,1\} \cdot T^{\theta(0) + \epsilon} \right)^{-1} = 2^{-10} T^{1+\alpha - \theta(0) - \epsilon} /\max\{c_1,1\} .\nonumber
	\end{align}
	For $T$ sufficiently large, we then must have
	\begin{align}
	\theta(\alpha) + \epsilon \geq 1 + \alpha - \theta(0) - \epsilon .\nonumber
	\end{align}
	Let $\epsilon \rightarrow 0$ leads to the desired result.
\end{proof}

\thmRateLowerBound*

\begin{proof}
For any adaptive rate function $\theta$ achieved by an algorithm, we first notice that $\theta(\alpha) \geq \theta(\alpha^\prime)$ for any $0 \leq \alpha^\prime \leq \alpha \leq 1$ as ${\cal H}_T(\alpha^\prime) \subseteq {\cal H}_T(\alpha)$, which also implies $\theta(\alpha) \geq \theta(0)$. From \cref{lm:rate_comparison}, we further obtain $\theta(\alpha) \geq 1 + \alpha - \theta(0)$ if $0 < \alpha \leq \theta(0)$. Thus, for any $\alpha \in (0, \theta(0)]$, we have 
\begin{align}
\theta(\alpha) \geq \max \{ \theta(0), 1 + \alpha - \theta(0) \}. \label{eq:rate_comparison}
\end{align}
Note that this indicates $\theta(\theta(0)) = 1$ since we trivially have $R_T \leq T$. For any $\alpha \in [\theta(0), 1]$, we have $\theta(\alpha) \geq \theta(\theta(0)) = 1$, which also leads to $\theta(\alpha) = 1$ for $\alpha \in [\theta(0), 1]$. To summarize, we obtain the desired result in \cref{eq:rate_lower_bound}. We have $\theta(0) \in [1/2, 1]$ as the minimax optimal rate among problems in ${\cal H}_T(0)$ is $1/2$ \citep{chu2011contextual}.
\end{proof}

\section{OMITTED PROOFS FOR SECTION \ref{sec:adaptivity}}

\subsection{The Virtual-Mixture Arm}

The expected reward of virtual mixture-arm $\widetilde{\nu}_j$ can be expressed as the total expected reward obtained in iteration $j$ divided by the corresponding time horizon $\Delta T_j$:
\begin{align}
    \widetilde{\mu}_j 
     = \E [\widetilde{\nu}_j] = \E \left[ \sum_{t  \text{ in iteration $j$}} X_t  \right] / \Delta T_j 
     =  \langle \theta_\star, a_\star \rangle -  R_{\Delta T_j}/\Delta T_j \in [-1, 1], \label{eq:expected_reward_virtual} 
\end{align}
where we use $R_{\Delta T_j}$ to denote the expected regret suffered in iteration $j$. Let $X_t$ be the reward obtained by pulling the virtual arm $\widetilde{\nu}_j$ (with $A_t$ being the feature representation of the drawn real arm), we then know that $X_t - \widetilde{\mu}_j$ is $\sqrt{2}$-sub-Gaussian since $X_t - \widetilde{\mu}_j = \left(X_t - \langle \theta_\star, A_t \rangle \right) + \left(\langle \theta_\star, A_t \rangle - \widetilde{\mu}_j  \right) = \eta_t + \left(\langle \theta_\star, A_t \rangle - \widetilde{\mu}_j  \right)$: $\eta_t$ is $1$-sub-Gaussian by assumption and $\left(\langle \theta_\star, A_t \rangle - \widetilde{\mu}_j  \right)$ is $1$-sub-Gaussian due to boundedness $\langle \theta_\star, A_t \rangle \in [-1, 1]$ and $\E [\langle \theta_\star, A_t \rangle] = \widetilde{\mu}_j$.

\subsection{Modifications of \linucb}
\label{appendix:modified_linucb}

Recall that, under \cref{assumption:action_set}, the linear reward structure is preserved in the modified linear bandit problem that \linucb will be working on in \cref{alg:linucbPlus}. Two main differences in the modified linear bandit problem from the original setting considered in \cite{chu2011contextual} are: (1) we will be working with $\sqrt{2}$-sub-Gaussian noise while they deal with strictly bounded noise; and (2) the norm of our reward parameter, i.e., $\| {\theta}_\star^{\langle d_i \rangle} \|$, could be as large as $1 + (p - 1) = p = \lceil \log_2(T^\beta) \rceil \leq \log_2(T) + 1 \leq 2 \log T $ when $T \geq 2$. 

To reduce clutters, we consider a $d$ dimensional linear bandit with time horizon $T$ and $K$ actions. We consider the reward structure $X_t = \langle \theta_\star, A_t \rangle + \eta_t$, where $\eta_t$ is an independent $\sqrt{2}$-sub-Gaussian noise, $\| \theta_\star\| \leq 2 \log T$ and $\| A_t \| \leq 1$.
The following \cref{thm:modified_linucb} takes care of these changes.

\begin{theorem}
    \label{thm:modified_linucb}
    For the modified setting introduced above, run \linucb with $\alpha = 2\sqrt{ \log (2TK/\delta)}$ leads to an upper bound 
    \begin{align}
        {O}\left( \log^2\left( K T \log ( T) / \delta \right) \cdot \sqrt{d  T } \right) \nonumber 
    \end{align}
    on the (pseudo) random regret with probability at least $1-\delta$.
\end{theorem}

\begin{corollary}
	\label{corollary:linucb}
    For the modified setting introduced above, run \linucb with $\alpha = 2\sqrt{ \log (2T^{3/2}K)}$ leads to an upper bound 
    \begin{align}
        {O}\left( \log^2\left( K T \log ( T)  \right) \cdot \sqrt{d  T } \right) \nonumber 
    \end{align}
    on the expected regret.
\end{corollary}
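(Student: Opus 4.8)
The plan is to obtain Corollary~\ref{corollary:linucb} from Theorem~\ref{thm:modified_linucb} by a routine high-probability-to-expectation conversion: pick the confidence parameter $\delta$ to be a small polynomial in $T$, apply the theorem on the resulting good event, and dispatch the complementary failure event with a crude deterministic bound on the pseudo regret.

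First I would instantiate Theorem~\ref{thm:modified_linucb} with $\delta = T^{-1/2}$. With this choice the exploration parameter prescribed by the theorem, $\alpha = 2\sqrt{\log(2TK/\delta)}$, becomes exactly $\alpha = 2\sqrt{\log(2T^{3/2}K)}$, matching the parameter in the corollary; and the theorem guarantees an event ${\cal E}$ with $\Prob({\cal E}) \ge 1 - T^{-1/2}$ on which $\widehat R_T \le O(\log^2(KT\log T/\delta)\sqrt{dT})$. Since $\log(KT^{3/2}\log T) \le \tfrac32\log(KT\log T)$, the bound on ${\cal E}$ simplifies to $\widehat R_T \le O(\log^2(KT\log T)\sqrt{dT})$.

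Next I would bound $\widehat R_T$ on ${\cal E}^c$ deterministically. By Cauchy--Schwarz and the normalizations of the modified setting, $|\langle\theta_\star, a\rangle| \le \|\theta_\star\|\,\|a\| \le 2\log T$ for every action, so each summand $\langle\theta_\star, a_\star - A_t\rangle$ lies in $[0, 4\log T]$ and $\widehat R_T \le 4T\log T$ always. Combining the two regimes,
\begin{align*}
R_T = \E[\widehat R_T] &\le \Prob({\cal E})\cdot O\!\left(\log^2(KT\log T)\sqrt{dT}\right) + \Prob({\cal E}^c)\cdot 4T\log T\\
&\le O\!\left(\log^2(KT\log T)\sqrt{dT}\right) + 4T^{-1/2}\cdot T\log T .
\end{align*}
Since $d \ge 1$ and $\log^2(KT\log T) \ge \log T$ for $T$ large, the second term $4\sqrt T\log T$ is dominated by $\sqrt{dT}\,\log^2(KT\log T)$, yielding the claimed expected-regret bound.

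There is no substantive obstacle here: the only care needed is verifying that the substitution $\delta = T^{-1/2}$ reproduces the corollary's $\alpha$, checking that the extra logarithmic factors introduced by $1/\delta = T^{1/2}$ are absorbed into the $O(\cdot)$, and noting that boundedness of $\theta_\star$ and of the action set forces a worst-case pseudo regret of only $O(T\log T)$, so the $\delta$-probability failure event contributes at most $O(\sqrt T\log T)$ to the expectation. All the real work sits in Theorem~\ref{thm:modified_linucb}, which adapts the analysis of \citet{chu2011contextual} to $\sqrt2$-sub-Gaussian noise and a reward vector of norm up to $2\log T$.
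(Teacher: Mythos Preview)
Your proposal is correct and follows exactly the approach the paper takes: instantiate Theorem~\ref{thm:modified_linucb} with $\delta = 1/\sqrt{T}$ (which yields the stated $\alpha$), and absorb the failure event into the expectation via a crude deterministic regret bound. The paper's own proof is the one-line remark ``combine the result in \cref{thm:modified_linucb} with $\delta = 1/\sqrt{T}$''; you have simply spelled out the routine details of that combination.
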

\begin{proof}
	One can simply combine the result in \cref{thm:modified_linucb} with $\delta = 1/\sqrt{T}$.
\end{proof}

It turns out that in order to prove \cref{thm:modified_linucb}, we mainly need to modify Lemma 1 in \cite{chu2011contextual}, and the rest of the arguments go through smoothly. The changed exponent on the logarithmic term is due to $\| \theta_\star\| \leq 2 \log T$. We introduce the following notations. Let
\begin{align*}
    V_0 = I  \quad \text{and} \quad V_t = V_{t-1} + A_t A_t^\top
\end{align*}
denote the design matrix up to time $t$; and let
\begin{align*}
    \widehat{\theta}_t = V_t^{-1}\sum_{i=1}^t A_i X_i
\end{align*}
denote the estimate of $\theta_\star$ at time $t$. 

\begin{lemma}(modification of Lemma 1 in \cite{chu2011contextual})
    Suppose for any fixed sequence of selected actions $\{A_i\}_{i \leq t}$ the (random) rewards $\{X_i\}_{i\leq t}$ are independent. Then we have 
    \begin{align}
        \P \left( \forall A_{t+1} \in \A_{t+1}: \vert \langle \widehat{\theta}_{t} - \theta_\star , A_{t+1} \rangle \vert \leq (\alpha + 2\log T) \sqrt{A_{t+1}^\top V_t^{-1} A_{t+1}} \right) \geq 1-\delta/T. \label{eq:lm_bound} 
    \end{align}
    
\end{lemma}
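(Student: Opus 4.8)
The plan is to mirror the proof of Lemma~1 in \citet{chu2011contextual}, which for our modified setting requires changes in exactly two places: the norm bound on $\theta_\star$ and the concentration inequality used for the noise. The starting point is the standard ridge-estimator identity. Since $V_0 = I$ and $\sum_{i=1}^t A_i \langle A_i, \theta_\star\rangle = (V_t - I)\theta_\star$, one obtains
\begin{align*}
\widehat\theta_t - \theta_\star = -\,V_t^{-1}\theta_\star \;+\; V_t^{-1}\sum_{i=1}^t A_i\eta_i ,
\end{align*}
so that for any candidate action $x := A_{t+1} \in \A_{t+1}$,
\begin{align*}
\langle \widehat\theta_t - \theta_\star, x\rangle \;=\; -\,x^\top V_t^{-1}\theta_\star \;+\; \sum_{i=1}^t \big(x^\top V_t^{-1}A_i\big)\,\eta_i .
\end{align*}
I would bound the deterministic bias term and the stochastic noise term separately and then combine them.

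For the bias term, Cauchy--Schwarz in the inner product induced by the positive definite matrix $V_t^{-1}$ gives $|x^\top V_t^{-1}\theta_\star| \le \sqrt{\theta_\star^\top V_t^{-1}\theta_\star}\cdot\sqrt{x^\top V_t^{-1}x}$. Because $V_t \succeq V_0 = I$ we have $V_t^{-1}\preceq I$, hence $\theta_\star^\top V_t^{-1}\theta_\star \le \|\theta_\star\|^2 \le (2\log T)^2$; this is precisely the step that produces the additive $2\log T$ (the corresponding step in \citet{chu2011contextual} yields $+\,1$ because there $\|\theta_\star\|\le 1$). For the noise term I would invoke the hypothesis that, conditionally on a fixed action sequence $A_1,\dots,A_t$, the rewards --- equivalently the noises $\eta_1,\dots,\eta_t$ --- are independent; this independence is exactly what the phase-splitting construction of \suplinucb/\linucb supplies, and it is what lets us avoid a self-normalized martingale bound. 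Conditioning on $A_1,\dots,A_t$ and on $x$, the coefficients $c_i := x^\top V_t^{-1}A_i$ are constants, so $\sum_i c_i\eta_i$ is a sum of independent sub-Gaussians and is $\big(\sqrt{2\sum_i c_i^2}\big)$-sub-Gaussian. A one-line computation gives $\sum_i c_i^2 = x^\top V_t^{-1}(V_t-I)V_t^{-1}x = x^\top\!\big(V_t^{-1}-V_t^{-2}\big)x \le x^\top V_t^{-1}x$, so the weighted sum is $\sqrt{2}\,\sqrt{x^\top V_t^{-1}x}$-sub-Gaussian. Applying the sub-Gaussian tail bound at level $\alpha\sqrt{x^\top V_t^{-1}x}$ with $\alpha = 2\sqrt{\log(2TK/\delta)}$ gives per-action failure probability $2\exp(-\alpha^2/4) = \delta/(TK)$, and a union bound over the at most $K$ actions of $\A_{t+1}$ yields total failure probability $\delta/T$. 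Adding the bias and noise bounds gives $(\alpha + 2\log T)\sqrt{x^\top V_t^{-1}x}$ simultaneously over $\A_{t+1}$, which is \cref{eq:lm_bound}.

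I do not anticipate a substantive obstacle: both modifications --- carrying $\|\theta_\star\| \le 2\log T$ through the bias term, and replacing the Hoeffding/Azuma bound for $[0,1]$-valued rewards by a sub-Gaussian tail (which only rescales the constant inside $\alpha$) --- are routine, and every subsequent step of the Chu et al.\ argument that builds on this lemma up to \cref{thm:modified_linucb} (the induction over phases, with the polylogarithmic factor inflated by the $2\log T$ term) carries over verbatim. The only points that require a little care are (i) ensuring that the independence of the $\eta_i$ being used is the one guaranteed by the master-algorithm construction rather than being assumed of the raw adaptive trajectory, and (ii) that the union bound is taken over the action set $\A_{t+1}$ available at round $t+1$, which has at most $K$ elements.
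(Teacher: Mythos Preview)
Your proposal is correct and follows essentially the same route as the paper: both split $\langle\widehat\theta_t-\theta_\star,A_{t+1}\rangle$ into the bias term $-A_{t+1}^\top V_t^{-1}\theta_\star$ and the noise term $\sum_i (A_{t+1}^\top V_t^{-1}A_i)\eta_i$, bound the noise by the same sub-Gaussian/Hoeffding argument with $\sum_i c_i^2\le A_{t+1}^\top V_t^{-1}A_{t+1}$, and bound the bias using $V_t^{-1}\preceq I$ together with $\|\theta_\star\|\le 2\log T$. The only cosmetic difference is that the paper bounds the bias via $\|A_{t+1}^\top V_t^{-1}\|\cdot\|\theta_\star\|$ while you use Cauchy--Schwarz in the $V_t^{-1}$-inner product; both yield the same $2\log T\,\sqrt{A_{t+1}^\top V_t^{-1}A_{t+1}}$.
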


\begin{remark}
	The requirement of (conditional) independence is guaranted by the \suplinucb algorithm introduced in \cite{chu2011contextual}, and is not satisfied by the vanilla \linucb: the reveal/selection of a future arm $A_{t+1}$ makes previous rewards $\{X_i\}_{i \leq t}$ dependent. See Remark 4 in \cite{han2020sequential} for a detailed discussion.
\end{remark}

\begin{proof}
    For any fixed $A_t$, we first notice that 
    \begin{align}
        \abs  {\ang {\widehat{\theta}_{t} - \theta_\star , A_{t+1} } } & = \abs { A^{\top}_{t+1} V_t^{-1}\sum_{i=1}^t A_i X_i - A^{\top}_{t+1} \theta_\star}\nonumber \\
        & = \abs{ A^{\top}_{t+1} V_t^{-1}\sum_{i=1}^t A_i X_i - A^{\top}_{t+1} V_t^{-1} \left( I + \sum_{i=1}^t A_i A_i^\top \right)\theta_\star } \nonumber \\
        & \leq  \abs{ \sum_{i=1}^t A^\top_{t+1} V^{-1}_t A_i \left( X_i - A_i^{\top} \theta_\star \right)} + \abs{A^{\top}_{t+1} V_t^{-1} \theta_\star} \nonumber \\
        & \leq \abs{ \sum_{i=1}^t A^\top_{t+1} V^{-1}_t A_i \left( X_i - A_i^{\top} \theta_\star \right)} +  \norm{A^{\top}_{t+1} V_t^{-1} } \cdot \norm{\theta_\star}. \label{eq:lm_bound_two_terms}
    \end{align}
We next bound the two terms in \cref{eq:lm_bound_two_terms} seperately.

For the first term in \cref{eq:lm_bound_two_terms}, since $\left( X_i - A_i^\top \theta_\star \right)$ is $\sqrt{2}$-sub-Gaussian and $\{X_i\}_{i \leq t}$ are independent, we know that $\sum_{i=1}^t A^\top_{t+1} V^{-1}_t A_i \left( X_i - A_i^{\top} \theta_\star \right)$ is $\left( \sqrt{2 \sum_{i=1}^t \left( A^\top_{t+1} V^{-1}_t A_i \right)^2} \right)$-sub-Gaussian. Since 
\begin{align}
    \sqrt{\sum_{i=1}^t \left( A^\top_{t+1} V^{-1}_t A_i \right)^2}  & 
    = \sqrt{\sum_{i=1}^t A_{t+1}^\top V_t^{-1} A_i A_i^\top V_t^{-1} A_{t+1}} \nonumber \\
    & \leq \sqrt{A_{t+1}^\top V_t^{-1} \left( I + \sum_{i=1}^t A_i A_i^\top \right) V_t^{-1} A_{t+1}} \nonumber \\
    & = \sqrt{A_{t+1}^\top V_t^{-1} A_{t+1}}, \nonumber 
\end{align}
according to a standard Chernoff-Hoeffding bound, we have 
\begin{align}
    \P \left( \abs{ \sum_{i=1}^t A^\top_{t+1} V^{-1}_t A_i \left( X_i - A_i^{\top} \theta_\star \right)} \geq \alpha \sqrt{A_{t+1}^\top V_t^{-1} A_{t+1}} \right) & \leq 2 \exp\left( -\frac{\alpha^2}{4} \right) \nonumber \\
    & = \frac{\delta}{TK}, \label{eq:lm_bound_alpha}
\end{align}
where \cref{eq:lm_bound_alpha} is due to $\alpha = 2\sqrt{\log (2TK/\delta)}$.

For the second term in \cref{eq:lm_bound_two_terms}, we have 
\begin{align}
   \norm{A^{\top}_{t+1} V_t^{-1} } \cdot \norm{\theta_\star} & \leq 2 \log T \, \sqrt{A^\top_{t+1} V^{-1}_t I V^{-1}_t A_{t+1}} \label{eq:lm_bound_norm} \\
   & \leq 2 \log T \, \sqrt{A^\top_{t+1} V^{-1}_t \left(I + \sum_{i=1}^t A_i A_i^\top \right) V^{-1}_t A_{t+1}} \nonumber \\
   & =  2 \log T \,\sqrt{A_{t+1}^\top V_t^{-1} A_{t+1}} . \nonumber 
\end{align}
where \cref{eq:lm_bound_norm} comes from the fact that $\norm{\theta_\star} \leq 2 \log T$.

The desired result in \cref{eq:lm_bound} follows from a union bound argument together with the two upper bounds derived above. 
\end{proof}

\begin{remark}
	Technically, regret guarantees are for a more complicated version of \linucb that ensures statistical independence \citep{chu2011contextual}. However, as recommended by \cite{chu2011contextual}, we will use the more practical \linucb as our subroutine. 
\end{remark}

\subsection{Notations and Preliminaries for Analysis of \linucbPlus}
\label{app:preliminary_linucbPlus}

We provide some notations and preliminaries for analysis of \linucbPlus that will be used in the following two subsections, i.e., the proofs of \cref{lm:linucb_learning_error} and \cref{thm:linucbPlus}.

We define $T_i = \sum_{j = 1}^{i} \Delta T_j$ so that the $i$-th iteration of \linucbPlus goes from $T_{i-1} + 1$ to $T_i$. We first notice that \cref{alg:linucbPlus} is a valid algorithm in the sense that it selects an arm $A_t$ for any $t \in [T]$, i.e., it does not terminate before time $T$: the argument is clearly true if there exists $i \in [p]$ such that $\Delta T_i = T$; otherwise, we can show that 
	\begin{align*}
	    T_p  = \sum_{i=1}^p \Delta T_i = 2(2^{2p} - 1) \geq 2^{2p} \geq T,
	\end{align*}
    for all $\beta \in [1/2, 1]$. 

We use $R_{\Delta T_i} = \Delta T_i \cdot \mu_\star - \E [\sum_{t = T_{i-1}+1}^{T_i} X_t ]$ to denote the expected cumulative regret at iteration $i$. Let ${\cal F}_{i}$ denote the information collected up to the end of iteration $i$, we further use $R_{\Delta T_i \vert {\cal F}_{i-1}}$ to represent the expected regret conditioned on ${\cal F}_{i-1}$ and have $\E[R_{\Delta T_i \vert {\cal F}_{i-1}}] = R_{\Delta T_i}$.

In the modified linear bandit problem at each iteration $i$, we will be applying \linucb with respect to a $d_i +i -1$ dimensional problem with an action set $\A^{\ang{d_i}}$ such that $\abs*{\A^{\ang{d_i}}} \leq K + i -1$. Let $a^{\langle d_i \rangle }_{\star} = \argmax_{a \in {\cal A}^{\langle d_i \rangle}} \{\langle \theta^{\langle d_i \rangle}_{\star}, a \rangle \}$ denote the best arm in the $i$-th iteration. Applying \cref{eq:regret_decomposition} on $R_{\Delta T_i \vert {\cal F}_{i-1}}$ leads to 
\begin{align}
	\label{eq:regret_decomposition_internal}
	R_{\Delta T_i\vert {\cal F}_{i-1}} = \Delta T_i \cdot  \left(\langle \theta_\star, a_\star \rangle - \langle \theta_{\star}^{\langle d_i \rangle}, a_{\star}^{\langle d_i \rangle} \rangle \right) +  \E \left[ \sum_{t=T_{i-1}+1}^{T_i} \langle \theta_{\star}^{\langle d_i \rangle}, a_{\star}^{\langle d_i \rangle} - A_t \rangle \, \bigg\vert \, {\cal F}_{i-1} \right] ,
\end{align}
where $A_t \in {\cal A}^{\langle d_i \rangle}$ and $\langle \theta_{\star}^{\langle d_i \rangle}, A_t \rangle$ represents the expected reward of pulling arm $A_t$.

\subsection{Proof of \cref{lm:linucb_learning_error}}
The proof of \cref{lm:linucb_learning_error} follows the notations and preliminaries introduced in \cref{app:preliminary_linucbPlus}.

\begin{restatable}{lemma}{lmLinucbLearningError}
    \label{lm:linucb_learning_error}
    At each iteration $i \in [p]$, the learning error suffered from subroutine \linucb is upper bounded by ${O} \paren*{\log^{5/2} \left( KT\log T\right) \cdot T^\beta}$.
\end{restatable}

\begin{proof}

    We focus on the second term in \cref{eq:regret_decomposition_internal}, i.e., the (conditional) learning error during iteration $i$. Conditioning on $\F_{i-1}$, both $\theta^{\langle d_i \rangle}_\star$ and $a_\star^{\ang{d_i}}$ can be treated as fixed quantities. Applying the regret bound in \cref{corollary:linucb}, we have:
	\begin{align}
	\E \left[ \sum_{t=T_{i-1}+1}^{T_i} \langle \theta_{\star}^{\langle d_i \rangle}, a_{\star}^{\langle d_i \rangle} - A_t \rangle \, \bigg\vert \, {\cal F}_{i-1} \right] 
	& = {O} \left( \log^2 \left( (K+i-1) \Delta T_i \log (\Delta T_i) \right) \cdot \sqrt{(d_i +i -1 ) \Delta T_i  } \right) \label{eq:regret_learning_0} \\
	& = {O} \left( \log^2 \left( (K+p) \Delta T_i \log (\Delta T_i) \right) \cdot  \sqrt{(d_i+p) \Delta T_i}  \right) \label{eq:regret_learning_1}\\
	& = {O} \left(\log^2 \left( (K+p) T \log T \right)  \cdot \sqrt{2^{2p+2} + p T} \right) \label{eq:regret_learning_2} \\
	& = {O} \left(\log^2 \left( K T \log T \right)  \cdot \sqrt{T^{2\beta} + \log T \cdot T} \right) \label{eq:regret_learning_3} \\
	& = {O} \left(\log^{5/2} \left( KT\log T\right) \cdot T^\beta  \right), \label{eq:regret_learning}
	\end{align}
	where \cref{eq:regret_learning_0} comes from the guarantee of \linucb in \cref{corollary:linucb}; \cref{eq:regret_learning_1} uses the fact that $i \leq p $; \cref{eq:regret_learning_2} comes from the definition of $d_i$ and $\Delta T_i$; \cref{eq:regret_learning_3} comes from the fact that $p = \ceil*{ \log_2 T^\beta}$; \cref{eq:regret_learning} comes from trivially bounding $\sqrt{T^{2\beta} + \log T \cdot T} = O((\log T)^{1/2} \cdot T^\beta)$.\footnote{One can improve the bound to $\sqrt{T^{2\beta} + \log T \cdot T} = O(T^\beta)$ in many cases, e.g., when $\beta > 1/2$. We mainly focus on the polynomial terms here.} The desired result follows from taking another expectation over randomness in $\F_{i-1}$.
\end{proof}

\subsection{Proof of \cref{thm:linucbPlus}}
\label{app:thm_linucbPlus}
The proof of \cref{thm:linucbPlus} follows the notations and preliminaries introduced in \cref{app:preliminary_linucbPlus}.

\thmLinucbPlus*

\begin{proof}

    When $\alpha \geq \beta$, one could see that \cref{thm:linucbPlus} trivially holds since $T^{1 + \alpha - \beta} \geq T$. In the following, we only consider the case when $\alpha < \beta$.

	Taking expectation on \cref{eq:regret_decomposition_internal} and combining the result in \cref{lm:linucb_learning_error}, we obtain
	\begin{align}
	\label{eq:regret_decomposition_internal_expectation}
	R_{\Delta T_i}  =  \Delta T_i \cdot  \E \left[ \left( \langle \theta_\star, a_\star \rangle -\langle \theta_{\star}^{\langle d_i \rangle}, a_{\star}^{\langle d_i \rangle} \rangle  \right) \right]  + {O} \left( \log^{5/2} \left( KT\log T\right) \cdot T^\beta \right).
	\end{align}

	We now focus on the first term, i.e., the expected approximation error over the $i$-th iteration. Notice that, according to the definition of $a^{\langle d_i \rangle}_{\star}$ and $\theta^{\langle d_i \rangle}_\star$, we have $\langle \theta_{\star}^{\langle d_i \rangle}, a_{\star}^{\langle d_i \rangle} \rangle = \langle \theta_\star, a_\star \rangle$ if $d_i \geq d_\star$, i.e., the optimal arm is contained in the action set $\A^{\ang{d_i}}$. Let $i_\star \in [p]$ be the largest integer such that $d_{i_\star} \geq d_{\star}$, we then have that, for any $i \leq i_\star$ and in particular for $i = i_\star$,
	\begin{align}
	 R_{\Delta T_{i}} =  {O} \left(T^\beta \log^{5/2} \left( KT\log T\right)\right). \label{eq:regret_i0}
	\end{align}
	
	In the case when $\Delta T_{i_\star} = \min \{2^{p+i_\star}, T \}=T$ or $i_\star = p$, we know that \linucbPlus will in fact stop at a time step no larger than $T_{i_\star}$ (since the allowed time horizon is $T$), and incur no regret in iterations $i > i_\star$. In the following, we only consider the case when $\Delta T_{i_\star} = 2^{p+i_\star}$ and $i_\star < p$. To incooperate another possible corner case when $d_{i_\star} = \min\{2^{p+2-i_\star}, d\} = d$, we consider $d_{i_\star + 1} = 2^{p + 1 - i_\star} < d_{i_\star}$. As a result, we have $d_{i_\star} \Delta T_{i_\star} > d_{i_\star + 1} \Delta T_{i_\star} = 2^{2p+1}$, which leads to
	\begin{align}
	\Delta T_{i_\star} > \frac{2^{2p + 1}}{d_{i_\star}} > \frac{2^{2p }}{d_\star} = \frac{2^{2p }}{T^{\alpha}}, \label{eq:Ti0}
	\end{align}
	where \cref{eq:Ti0} comes from the fact that $ d_{i_\star} < 2 d_\star$ according to the definition of $i_\star$.\footnote{We will have $\Delta T_{i_\star}  \geq 2^{2p+1}/T^\alpha > 2^{2p}/T^\alpha$ if $d_{i_\star} = \min\{2^{p+2-i_\star}, d\}  = 2^{p+2-i_\star}$.}
	
	We now analysis the expected approximation error for iteration $i > i_\star$. Since the sampling information during $i_\star$-th iteration is summarized in the virtual mixture-arm $\widetilde{\nu}_{i_\star}$, and its representation $\widetilde{\nu}^{\langle d_i \rangle}_{i_\star}$ is added to ${\cal A}^{\langle d_i \rangle}$. For any $i > i_\star$, we then have 
	\begin{align}
    \Delta T_i \cdot  \E \left[ \left( \langle \theta_\star, a_\star \rangle -\langle \theta_{\star}^{\langle d_i \rangle}, a_{\star}^{\langle d_i \rangle} \rangle \right) \right]  & \leq \Delta T_i \cdot \E \left[\left(\langle \theta_\star, a_\star \rangle - \langle \theta_{\star}^{\langle d_i \rangle}, \widetilde{\nu}^{\langle d_i \rangle}_{i_\star} \rangle \right) \right]  \nonumber  \\
    & = \Delta T_i \cdot \left(\langle \theta_\star, a_\star \rangle -  \widetilde{\mu}_{i_\star} \right) \label{eq:regret_approximation_equivalent_form}  \\
	& = \frac{\Delta T_i}{\Delta T_{i_\star}}\cdot  R_{\Delta T_{i_\star}} \label{eq:regret_approximation_expected_reward} \\
	& = \frac{\Delta T_i}{\frac{2^{2p}}{T^\alpha }} \cdot {O} \left(\log^{5/2} \left( KT\log T\right) \cdot T^\beta  \right)\label{eq:regret_approximation_i_star} \\
	& = \frac{ {O} \left(  \log^{5/2} \left( KT\log T\right)  \cdot T^{1 + \alpha + \beta} \right) }{{2^{2p}}} \label{eq:regret_approximation_important} \\
	& = {O} \left( \log^{5/2} \left( KT\log T\right) \cdot T^{1  + \alpha - \beta} \right), \label{eq:regret_approximation}
	\end{align}
	where \cref{eq:regret_approximation_equivalent_form} comes from the formulation of the modified linear bandit problem; \cref{eq:regret_approximation_expected_reward} comes from that fact that $\widetilde{\mu}_j = \E [\widetilde{\mu}_{j \vert \F_j}]=\langle \theta_\star, a_\star \rangle -  R_{\Delta T_j}/\Delta T_j$ derived from \cref{eq:expected_reward_virtual};
	\cref{eq:regret_approximation_i_star} comes from the bound in \cref{eq:regret_i0} with $i=i_\star$; \cref{eq:regret_approximation_important} comes from the fact that $\Delta T_i \leq T$ and some rewriting; \cref{eq:regret_approximation} comes from the fact that $p = \lceil \log_2 T^{\beta} \rceil \geq \log_2 T^{\beta}$.
	
Combining \cref{eq:regret_approximation} and \cref{eq:regret_decomposition_internal_expectation} for cases when $i > i_\star$ (or the corner case algorithm stops before $T_{i_\star}$ and incurs no regret in iterations $i \geq i_\star$), and together with \cref{eq:regret_i0} for cases when $i \leq i_\star$, we have that $\forall i \in [p]$,
\begin{align}
    R_{\Delta T_i}  & =  {O} \left( \log^{5/2} \left( KT\log T\right) \cdot T^{\max\{\beta, 1  + \alpha - \beta \}} \right) . \nonumber 
\end{align}

Since the cumulative regret is non-decreasing in $t$, we have
\begin{align}
	R_{T} & \leq \sum_{i=1}^p R_{\Delta T_i} \nonumber \\
	& = \sum_{i=1}^p {O} \left( \log^{5/2} \left( KT\log T\right) \cdot T^{\max\{\beta, 1  + \alpha - \beta \}} \right) \nonumber \\
    & = {O} \left( \log^{7/2} \left( KT\log T\right) \cdot T^{\max\{\beta, 1  + \alpha - \beta \}} \right),\nonumber
\end{align}
where we use the fact that $p = \lceil \log_2(T^\beta) \rceil = O(\log T)$. Our results follows after noticing $R_T \leq T$ is a trivial upper bound.
\end{proof}

\subsection{Proof of \cref{thm:pareto}}

\pareto*

\begin{proof}
From \cref{thm:linucbPlus}, we know that the rate in \cref{eq:pareto_rate} is achieved by \cref{alg:linucbPlus} with input $\beta$. We only need to prove that no other algorithms achieve strictly smaller rates in pointwise order.

Suppose, by contradiction, we have $\theta^\prime$ achieved by an algorithm such that $\theta^\prime(\alpha) \leq \theta_{\beta}(\alpha)$ for all $\alpha \in [0, 1]$ and $\theta^\prime(\alpha_0) < \theta(\alpha_0)$ for at least one $\alpha_0 \in [0, 1]$. We then must have $\theta^\prime(0) \leq \theta_{\beta}(0) = \beta$. We consider the following two exclusive cases.

\textbf{Case 1 $\theta^\prime(0) = \beta$.} According to \cref{thm:rate_lower_bound}, we must have $\theta^\prime \geq \theta_\beta$, which leads to a contradiction.

\textbf{Case 2 $\theta^\prime(0) = \beta^\prime < \beta$.} According \cref{thm:rate_lower_bound}, we must have $\theta^\prime \geq \theta_{\beta^\prime}$. However, $\theta_{\beta^\prime}$ is not strictly better than $\theta_{\beta}$, e.g., $\theta_{\beta^\prime}(2\beta - 1) =  2\beta - \beta^\prime > \beta = \theta_{\beta}(2\beta - 1)$, which also leads to a contradiction.
\end{proof}

\section{ANALYSIS FOR SECTION \ref{sec:remove_assumption}}
\label{app:remove_assumption}

\subsection{Discussion on \cref{alg:linucbPlus_modified}}
We construct the following two (smoothed) base algorithms \citep{pacchiano2020model} at each iteration of \linucbPlus: (1) a \linucb algorithm that works with truncated feature representations in $\R^{d_i}$, with possible mis-specifications; and (2) a \ucb algorithm that works only with virtual mixture-arms, if there exists any. We use \smoothCorral from \cite{pacchiano2020model} as the master algorithm and always optimally tune it with respect to the \linucb base, i.e., set the learning rate as $\eta = 1/\sqrt{d_i \Delta T_i}$. For iterations such that $d_i \geq d_\star$, the \linucb is the optimal base and we incur $\widetilde{O}(\sqrt{d_i \Delta T_i}) = \widetilde{O}(T^\beta)$ regret; a good enough virtual mixture-arm $\widetilde{\nu}_{i_\star}$ is then constructed as before. For later iterations such that $d_i < d_\star$, \smoothCorral incurs regret $\widetilde{O}(\max \{T^{1+\alpha - \beta}, T^{\beta}\})$ thanks to guarantees of the \ucb base: the $\widetilde{O}(T^{1+\alpha - \beta})$ term is due to the approximation error and the $\widetilde{O}(T^\beta)$ term is due to the learning error. Although the learning error of \ucb is enlarged from $\widetilde{O}(T^{1/2})$ to $\widetilde{O}(T^\beta)$, as \smoothCorral is always tuned with respect to the \linucb base, this won't affect the resulted Pareto optimality.

\subsection{Proof of \cref{thm:linucbPlus_modified}}

\thmLinUCBPlusModified* 

\begin{proof}

At each iteration $i \in [p]$ of \linucbPlus, we applying \smoothCorral as the master algorithm with two smoothed base algorithms: (1) a \linucb algorithm that works with truncated feature representations in $\R^{d_i}$, with possible mis-specifications; and (2) a \ucb algorithm that works only with virtual mixture-arms, if there exists any. The learning rate of \smoothCorral is always optimally tuned with respect to the \linucb base, i.e., $\eta = 1/\sqrt{d_i \Delta T_i}$. Since there are at most $p = O(\log T)$ iterations, we only need to bound the expected regret at each iteration $R_{\Delta T_i}$. As before, we use $i_\star \in [p]$ to denote the largest integer such that $d_{i_\star} \geq d_\star$.

For $i \leq i_\star$, the \linucb base works on a well-specified linear bandit problem. Theorem 5.3 in \cite{pacchiano2020model} gives the following guarantees:
\begin{align*}
    R_{\Delta T_i} = \widetilde{O} \left(\sqrt{\Delta T_i} + \eta^{-1} + \Delta T_i \eta + \Delta T_i d_i \eta \right) = \widetilde{O} \left(\sqrt{d_i \Delta T_i} \right) = \widetilde{O} \left(T^\beta \right).
\end{align*}
Good enough virtual mixture-arm $\widetilde{\nu}_{i_\star}$ is then constructed with conditional expectation $\widetilde{\mu}_{i_\star \vert \F_{i_\star}} = \E [\widetilde{\nu}_{i_\star} \vert \F_{i_\star}] = \langle \theta_\star, a_\star \rangle -  \widehat R_{\Delta T_{i_\star}}/\Delta T_{i_\star}$.

We now analyze the regret incurred for iteration $i > i_\star$. Conditioning on past information $\F_{i-1}$ and let $r(\pi_t)$ denote the (conditional) expected reward of applying policy $\pi_t$, we have 
\begin{align}
    R_{\Delta T_i \vert \F_{i-1}} & = \Delta T_i \cdot   \left( \langle \theta_\star, a_\star \rangle -\widetilde{\mu}_{i_\star \vert \F_{i_\star}} \right) + \E \left[ \sum_{t \text{ in iteration } i} \widetilde{\mu}_{i_\star\vert \F_{i_\star} } - r(\pi_t) \, \bigg\vert \, {\cal F}_{i-1}  \right] \nonumber \\
    & = \Delta T_i \cdot   \left( \langle \theta_\star, a_\star \rangle -\widetilde{\mu}_{i_\star \vert \F_{i_\star}} \right) + \widetilde{O} \left(\sqrt{\Delta T_i} + \eta^{-1} + \Delta T_i \eta + \Delta T_i \eta \right) \nonumber ,
\end{align}
where the second term comes from the guarantee of \smoothCorral with respect to the \ucb base. Taking expectation over randomness in $\F_{i-1}$ leads to
\begin{align}
      R_{\Delta T_i }  & = \widetilde{O}\left(T^{1+\alpha - \beta} \right) + \widetilde{O}\left( T^\beta \right) \nonumber ,
\end{align}
where the first term follows from a similar analysis as in \cref{eq:regret_approximation}, and the second term follows by setting $\eta = 1/\sqrt{d_i \Delta T_i}$. A similar analysis as in \cref{thm:pareto} thus show \cref{alg:linucbPlus_modified} is Pareto optimal, even without \cref{assumption:action_set}.
\end{proof}

\subsection{Discussion on \smoothCorral}
\label{app:corral}

\cite{pacchiano2020model} tackles the model selection problem in linear bandit by applying \smoothCorral with $O(\log d)$ base \linucb learners working with different dimensions $d_i \in \{2^0, 2^1, \dots, 2^{\floor*{\log d}}\}$. Let $d_{i_\star}$ denote the smallest dimension that satisfies $d_{i_\star} \geq d_\star$. With respect to the base \linucb working on the first $d_{i_\star}$ dimensions, \smoothCorral enjoys regret guarantee
\begin{align}
    R_{T} = \widetilde{O} \left(\sqrt{T} + \eta^{-1} + T \eta + T d_\star \eta \right) \nonumber.
\end{align}
\smoothCorral then achieves the rate function in \cref{eq:pareto_rate} by setting the learning rate $\eta = T^{-\beta}$ (and also noticing that $d_\star \leq T^{\alpha}$).

\section{ADDITIONAL EXPERIMENT RESULTS}
\label{app:experiment}

We conduct additional experiments with setups similar to the ones shown in \cref{fig:non_expressive_alpha}, but with different reward parameters $\theta_\star$. 
We set $\theta_\star$ as (the normalized version of) $[\frac{1}{\sqrt{1}}, \frac{1}{\sqrt{2}}, \dots, \frac{1}{\sqrt{d_\star}}, 0, \dots, 0]^\top \in \R^d$ in \cref{fig:decay}; and $\theta_\star$ as (the normalized version of) $[\frac{1}{\sqrt{d_\star}}, \frac{1}{\sqrt{d_\star - 1}}, \dots, \frac{1}{\sqrt{1}}, 0, \dots, 0]^\top \in \R^d$ in \cref{fig:flip}. 
With $\theta_\star$ selected in \cref{fig:decay}, \dynamicBalancing shows comparable performance to \linucbPlus in terms of averaged regret (but with larger variance). \linucbPlus outperforms \dynamicBalancing when $\theta_\star$ is ``flipped'' (i.e., the one used in \cref{fig:flip}) but with the same intrinsic dimension $d_\star$.

\begin{figure}[h]
     \centering
     \subfloat[]{\includegraphics[width=.5\textwidth]{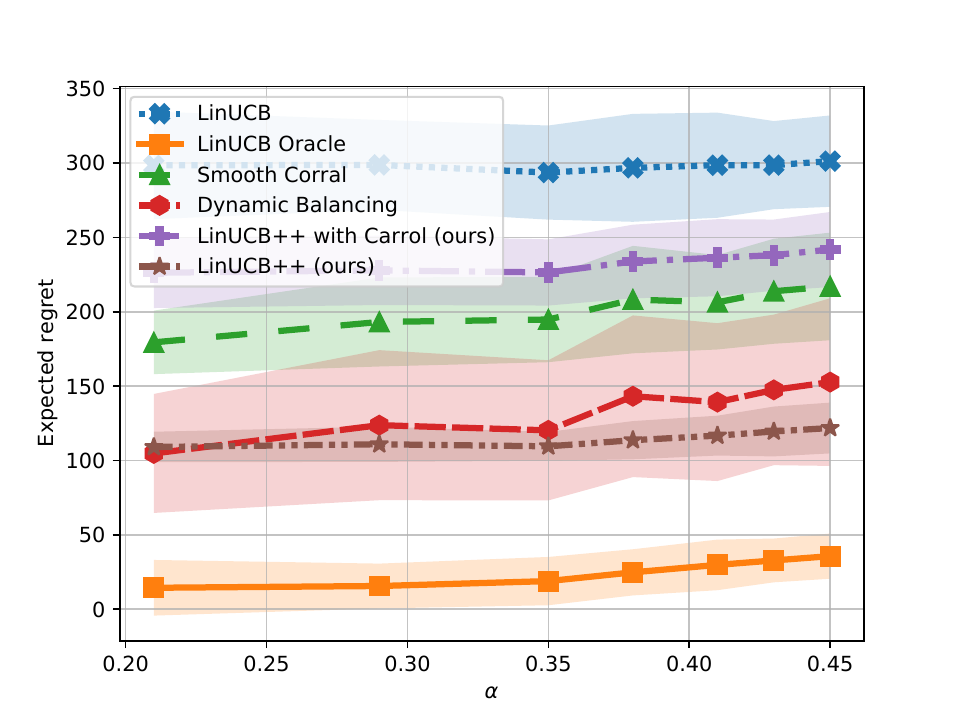}\label{fig:decay}}
     \subfloat[]{\includegraphics[width=.5\textwidth]{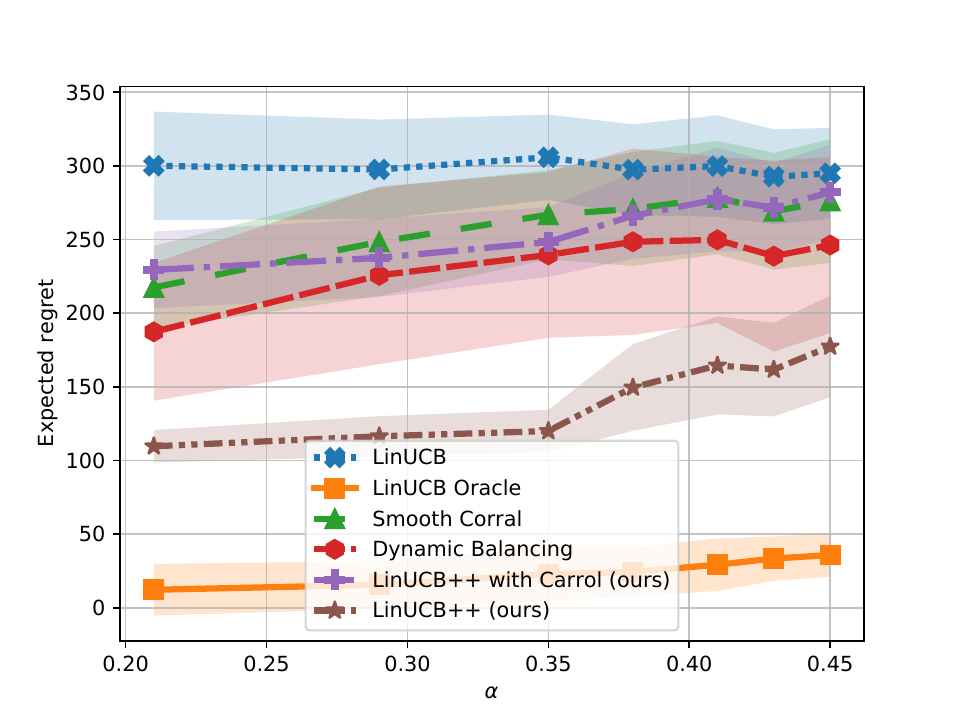}\label{fig:flip}}
     \caption{Similar Experiment Setups to Those Shown in \cref{fig:non_expressive_alpha}, but with Different Reward Parameters $\theta_\star$.}
     \label{fig:additional}
\end{figure}

\end{document}